\newtheorem{theorem}{Theorem}
\newtheorem{lemma}{Lemma}
\newtheorem*{theorem*}{Theorem}
\newtheorem*{lemma*}{Lemma}
\newtheorem*{proposition*}{Proposition}
\title{
Act to See, See to Act: Diffusion-Driven Perception-Action Interplay for Adaptive Policies
}
\author{
  Jing Wang$^{1}$ \quad
  Weiting Peng$^{2}$ \quad
  Jing Tang$^{2}$ \quad
  Zeyu Gong$^{2}$ \quad
  Xihua Wang$^{1}$ \quad\\
  \textbf{Bo Tao}$^{2}$ \quad
  \textbf{Li Cheng}$^{1}$ \\
  $^{1}$University of Alberta \qquad
  $^{2}$Huazhong University of Science and Technology\\
  \{jing39, xihua, lcheng5\}@ualberta.ca \\
  \{u202210565, j\_tang, gongzeyu, taobo\}@hust.edu.cn
}
\begin{document}

\maketitle
\begin{abstract}
Existing imitation learning methods decouple perception and action, which overlooks the causal reciprocity between sensory representation and action execution that humans naturally leverage for adaptive behaviors.
To bridge this gap, we introduce Action-Guided Diffusion Policy (DP-AG), a unified representation learning that explicitly models a dynamic interplay between perception and action through probabilistic latent dynamics.
DP-AG encodes latent observations into a Gaussian posterior via variational inference and evolves them using an action-guided SDE, where the Vector–Jacobian Product (VJP) of the diffusion policy's noise predictions serves as a structured stochastic force driving latent updates.
To promote bidirectional learning between perception and action, we introduce a cycle-consistent contrastive loss that organizes the gradient flow of the noise predictor into a coherent perception–action loop, enforcing mutually consistent transitions in both latent updates and action refinements.
Theoretically, we derive a variational lower bound for the action-guided SDE, and prove that the contrastive objective enhances continuity in both latent and action trajectories. 
Empirically, DP-AG significantly outperforms state-of-the-art methods across simulation benchmarks and real-world UR5 manipulation tasks.
As a result, our DP-AG offers a promising step toward bridging biological adaptability and artificial policy learning.
Code is available on our project website: \url{https://jingwang18.github.io/dp-ag.github.io/}.

\end{abstract}

\section{Introduction}
\begin{figure}[t]
    \centering
    \includegraphics[width=0.9\textwidth]{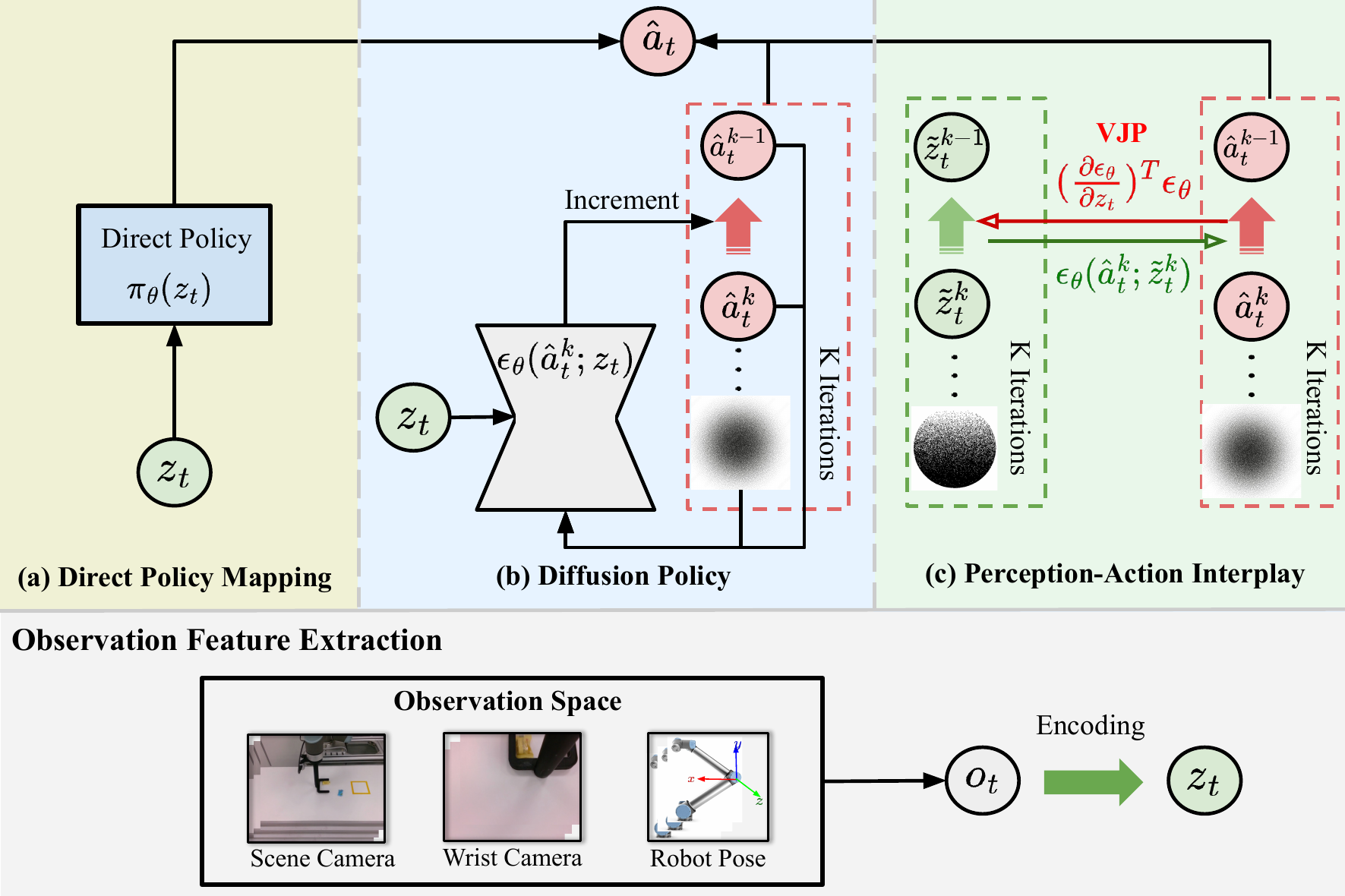} 
    \vspace{-1mm}
    \caption{{\bf Use of Observation Features.} {\bf (a)} Conventional methods map observation features directly to actions. {\bf (b)} DP models action distributions through incremental denoising from white noise, conditioned on observation features. {\bf (c)} DP-AG refines observation features via noise predictions, establishing a mutually reinforcing cycle between perception and action.}
    \label{fig:gap}
\vspace{-3mm}
\end{figure}

Imitation learning (IL) enables agents to replicate expert behavior from demonstrations. 
Direct mapping methods, such as~\citep{codevilla2018end, mandlekar22a, florence2022implicit}, learn a direct mapping between observations and actions.
In contrast, generative models such as Diffusion Policy (DP)~\citep{chi2023diffusion} and flow-matching methods~\citep{hu2024adaflow, zhang2025flowpolicy} model action distributions to improve continuity across time steps.
Vision-Language-Action (VLA)~\citep{kim2024openvla, black2024pi_0} improves perception by leveraging Vision-Language Models (VLMs) to interpret environmental cues and high-level instructions.
Despite progress, existing methods treat observation features as static during each action sequence generation (extracting them from a single time-point observation and holding them fixed while generating a short sequence of actions), thus overlooking the opportunity for the intermediate action feedback to refine perception.

Robust decision-making relies on a continuous interplay between perception and action~\citep{o2001sensorimotor}.
Humans naturally embody this principle by dynamically refining their environmental understanding through feedback from their own actions~\citep{brooks1991intelligence}.
Motivated by this, we propose {\em Action-Guided Diffusion Policy} (DP-AG), a representation learning framework for IL that explicitly models the perception–action interplay through probabilistic latent dynamics.
We build upon DP because it models the continuity within each short-horizon action sequence, where intermediate action feedback can be derived from its noise predictions at each diffusion step. 
DP-AG first grounds observation features in a Gaussian posterior via variational inference, capturing uncertainty from observation inputs.
To enable dynamic perception, we introduce an action-guided stochastic differential equation (SDE) in which latent features evolve across diffusion steps, driven by action-conditioned noise predictions.
Here, the Vector–Jacobian Product (VJP) from the diffusion model acts as a structured stochastic force that shape feature evolution. 

Although raw observations contain all available information, their usefulness for policy learning depends on how they are internally represented. 
Unlike static encoders that keep features fixed over an entire action sequence generation, DP-AG continuously refines them using action feedback. 
This process mirrors active perception in biological agents, where even if the external scene remains unchanged, sensory inputs are reinterpreted in the context of ongoing actions.
This captures the essence of {\em Act to See, See to Act}: the same observation is continually reinterpreted as actions unfold, where {\em see} refers to the evolving latent representation refined through action feedback rather than new external sensing.
As shown in Figure~\ref{fig:gap}, DP-AG conditions latent evolution on action-guided noise, using the continuity of action diffusion to keep perceptual dynamics aligned with the denoising of actions.
A cycle-consistent contrastive loss ensures that latent evolution remains coherent with action diffusion, preventing excessive drift and reinforcing the perception–action loop. 
Combined with variational inference, these components yield a principled and adaptive representation that produces smoother, more context-aware trajectories, as confirmed by both synthetic and real-robot experiments.

Theoretically, DP-AG introduces an action-guided latent SDE, derives a principled variational lower bound, and rigorously proves that the proposed cycle-consistent contrastive loss enforces continuous and coherent trajectories in both perception and action spaces. 
Empirically, DP-AG consistently outperforms state-of-the-art methods in success rate, convergence speed, and action smoothness, achieving gains of 6\% in Push-T and 13\% in Dynamic Push-T benchmarks, and delivering at least 23\% higher manipulation success and approximately 60\% smoother actions on real-world UR5 robot tasks compared to the baseline DP.
Our contributions are summarized below:
\begin{itemize}[leftmargin=*]
\vspace{-2mm}
\item We propose a novel observation representation learning that establishes a closed perception–action loop by refining latent observation features via VJP-guided noise predictions derived from DP.
\vspace{-0.5mm}
\item We formulate an action-guided latent SDE, derive a variational lower bound, and prove that cycle-consistent InfoNCE enforces mutual smoothness in both latent and action trajectories.
\vspace{-0.5mm}
\item We validate the effectiveness of DP-AG in both simulation and real-world scenarios, demonstrating consistent and significant improvements in task success rate, convergence speed, and smoothness of generated actions compared to state-of-the-art methods.
\end{itemize}

\section{Related Work}
\vspace{-1mm}
\subsection{Imitation Learning}
\vspace{-1mm}
IL provides an alternative to reinforcement learning (RL) by removing the need for explicit reward signals~\citep{sutton1999policy, osa2018algorithmic}.
Existing methods include Behavioral Cloning (BC)~\citep{bain1995framework, mandlekar22a, florence2022implicit}, which learns the mapping from observations to actions, and Inverse RL~\citep{arora2021survey} that infers implicit rewards from expert demonstrations.
Recent work~\citep{kim2024openvla, black2024pi_0} extends BC by incorporating VLMs into the perception pipeline, pushing robotic manipulation toward near-human capabilities.
However, existing methods decouple perception and action: the observation feature is frozen during each action sequence generation, without adapting to the evolving action refinements.
This constraint often leads to abrupt or discontinuous motions. 
In contrast, we propose a perception–action interplay where latent continuity and action smoothness reinforce each other, yielding more coherent behavior.
\vspace{-2mm}
\subsection{Generative Models in Policy Learning}
\vspace{-1mm}
Generative models have shown promise in modeling continuous action trajectories.
DP~\citep{chi2023diffusion} leverages Denoising Diffusion Probabilistic Models (DDPMs)~\citep{ho2020denoising} to iteratively refine action predictions using diffusion models.
Policy-guided diffusion~\citep{jackson2024policy} generates synthetic trajectories aligned with a target policy for offline RL, while Diffusion-QL~\citep{wang2023diffusion} uses diffusion models to learn expressive Q-functions.
Recently, flow-matching methods~\citep{lipman2023flow, hu2024adaflow, zhang2025flowpolicy} model actions as deterministic flow ordinary differential equations (ODEs), achieving faster inference without comprising generation quality.
Building on these, DP-AG extends action-space continuity of DP into the latent observation space, forming a dynamic perception-action loop that further enhances action smoothness.
\vspace{-2mm}
\subsection{Latent Observation Modeling in Agent Control}
\vspace{-1mm}
Modeling latent observations for control has been explored through planning and generative methods. PlaNet~\citep{hafner2019learning} and Dreamer~\citep{wu2023daydreamer} use variational autoencoders (VAEs)~\citep{kingma2014auto} to simulate future states for RL.
Embed to Control~\citep{watter2015embed} uses VAEs to learn latent states from images for control policy design.
In contrast, DP-AG closes the perception-action loop by reparameterizing latent observations with VJP-guided noise from DP, extending variational inference to capture action-guided evolution.

\vspace{-2mm}
\section{Preliminaries}
\vspace{-2mm}
In imitation learning, the goal is to learn a policy $\pi_\theta(a_t|o_t)$ that replicates expert behavior, where $o_t$ and $a_t$ denote the observation and action at a static time point $t$.
An encoder $f_\psi$ extracts static features $z_t = f_\psi(o_t)$, which are held fixed while the policy generates the corresponding $a_t$, meaning there is no feedback from action generation to perceptual features within the same short-horizon.
Given an expert demonstration dataset $\mathcal{D} = \{(o_t, a_t)\}$, standard BC minimizes the supervised loss:
\vspace{-1mm}
\begin{equation}
    \mathcal{L}_{\text{BC}}(\theta, \psi) = \mathbb{E}_{(o_t, a_t) \sim \mathcal{D}} \left[ \| \pi_\theta(f_\psi(o_t)) - a_t \|^2_2 \right].
\end{equation}
Rather than the direct mapping, DP models the conditional distribution $p_{\theta}(a_t|z_t)$ as a denoising diffusion process.
Starting from white noise $\epsilon \sim \mathcal{N}(0, I)$, actions are generated over $K$ denoising steps using a learned noise predictor $\epsilon_\theta$, conditioned on observation features.
\vspace{-1mm}
\begin{equation}\label{eqn:dp_denoise}
\hat{a}_t=\hat{a}_t^{0} = \hat{a}_t^{K} - \sum_{k=1}^K g(k) \cdot \epsilon_\theta(\hat{a}_t^{k}, z_t, k),
\end{equation}
where $a_t^{K}$ is the white noise, $g(k)$ is a noise schedule, and $\epsilon_\theta$ predicts the noise at diffusion step $k$.
The diffusion model is trained by minimizing the noise matching objective~\citep{ho2020denoising}:
\vspace{-1mm}
\begin{equation}\label{eqn:dp_train}
    \mathcal{L}_{\text{DP}}(\theta, \psi) = \mathbb{E}_{(o_t, a_t) \sim \mathcal{D}, k \sim \mathcal{U}(1, K)} \left[ \| \epsilon_\theta(a_t^{k}, f_\psi(o_t), k) - \epsilon \|^2_2 \right].
\end{equation}
Through the progressive refinement of noisy actions, DP estimates the continuity underlying discrete expert actions, which allows the policy to generate smooth transitions based on discrete observations.

To leverage this continuity for perception, we introduce DP-AG, which extends DP by propagating action refinement into the latent observations through Vector–Jacobian Products (VJPs) of noise predictions, enabling features to evolve dynamically alongside actions throughout diffusion.
Although we instantiate DP-AG with DDPMs, the method generalizes to other generative families such as score-based diffusion~\citep{song2021scorebased} and flow matching~\citep{lipman2023flow}, provided that intermediate action refinements are available (see Appendix~\ref{sec:flow-matching} for extending DP-AG to flow matching).

\vspace{-2mm}
\section{Our Approach}
\vspace{-2mm
}
\begin{figure}[t]
    \centering
    \includegraphics[width=0.99\textwidth]{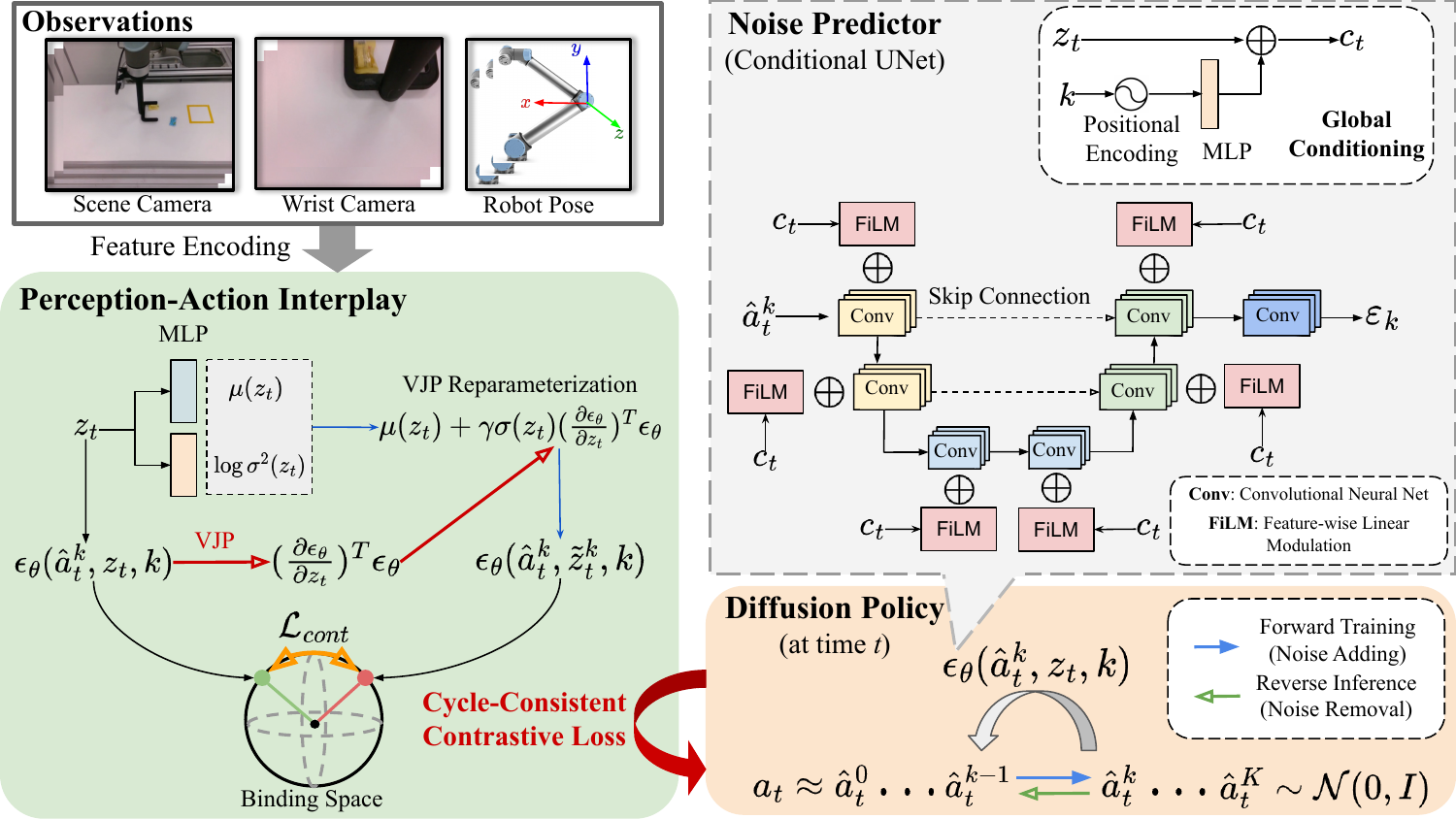} 
    \vspace{-1mm}
    \caption{{\bf Method Overview}. 
    While Diffusion Policy (DP) generates actions from static observation features, our DP-AG establishes a dynamic perception–action loop by guiding feature evolution via the VJP of DP’s predicted noise. 
    To reinforce interplay, a cycle-consistent contrastive loss aligns noise predictions from static and evolving features, enabling mutual perception–action influence.
    }
    \label{fig:overview}
\vspace{-2mm}
\end{figure}

We propose Action-Guided Diffusion Policy (DP-AG), a latent generative model that explicitly captures the dynamic interplay between perception and action.
As illustrated in Figure~\ref{fig:overview}, DP-AG extends static variational inference by introducing an SDE over observation features, where latent representations evolve under action-guided VJPs of the diffusion noise predictions rather than being driven by static white noise. 
This replaces random perturbations with structured stochastic forces that align uncertainty in the latent dynamics with the task’s perception–action structure. 
To further reinforce this interplay, we introduce a cycle-consistent contrastive loss that enforces agreement between action noise predictions conditioned on static and evolving latents, thereby closing the perception–action loop and enabling coherent updates throughout the diffusion process. 
Note that, this interplay is modeled within each action sequence generation $a_t$, occurring between time points $t$ and $t+1$ across the $K$ diffusion steps that connect them.

\vspace{-2mm}
\subsection{Latent Observation Modeling via Variational Inference}
\vspace{-1mm}
In real-world settings, observations are high-dimensional and noisy, which makes it impractical to rely on deterministic representations.
To model this inherent uncertainty, we begin by constructing a variational posterior over observation features:
\vspace{-1mm}
\begin{equation}
    q_\phi(z_t|o_t) = \mathcal{N}(\mu_\phi(z_t), \sigma_\phi^2(z_t)),
\end{equation}
where $\mu_\phi(z_t)$ and $\sigma^2_\phi(z_t)$ capture the mean and variance of observation features.
Sampling latent observations via the reparameterization trick~\citep{kingma2015variational}:
\vspace{-1mm}
\begin{equation}\label{eqn:vae}
    z_t = \mu_\phi(z_t) + \sigma_\phi(z_t) \odot \epsilon; \quad \epsilon \sim \mathcal{N}(0, I),
\end{equation}
provides a stochastic encoding of these features, which ensures that the policy can reason about uncertainty in sensory inputs.
However, it still treats perception as static: once extracted, observation features remain fixed during the corresponding action generation and cannot adapt to refinements.


\vspace{-2mm}
\subsection{Action-Guided Latent Evolution through Stochastic Differential Equations}
\vspace{-1mm}
To capture continuity from action diffusion, we model latent observation evolution using a SDE, where the drift is guided by action refinement.
At each step $k$, the latent feature $z_t$ evolves as:
\vspace{-1mm}
\begin{equation}
    d\tilde{z}_t^{k} = \text{VJP}(\hat{a}_t^{k}, z_t) \, dt + \sigma_\phi(z_t) \, dW_t,
\end{equation}
where the drift term $\text{VJP}(\hat{a}_t^{k}, z_t)$ is computed from the action noise prediction:
\vspace{-1mm}
\begin{equation}
    \text{VJP}(\hat{a}_t^{k}, z_t) = \left( \frac{\partial \epsilon_\theta(\hat{a}_t^{k}, z_t, k)}{\partial z_t} \right)^\top \epsilon_\theta(\hat{a}_t^{k}, z_t, k).
\end{equation}
This follows stochastic adjoint sensitivity methods~\citep{li2019scalable}, which show that VJP-based updates efficiently propagate gradients through SDEs without computing full Jacobians; modern autodiff frameworks make these computations practical with negligible overhead.

Beyond efficiency, the VJP serves as a task-driven attentional force, guiding latent features toward the parts of the observation that most effectively reduce uncertainty in predicting the next-step action noise.
Because its drift is computed from the DP’s instantaneous noise prediction, latent evolution is phase-aligned with action diffusion: moving forward during noise addition and reversing during action denoising. 
Unlike stochastic adjoint methods that separate forward evolution from backward updates, leaving latent features fixed during action inference, DP-AG’s VJP delivers immediate feedback-driven updates at every diffusion step. 
This mechanism preserves the full Jacobian structure in real time, keeps latent perception features phase-aligned with ongoing action diffusion, and captures the instantaneous sensitivity between actions and observation features.

Thus, instead of a static variational posterior, we reparameterize with an action-guided noise term:
\vspace{-1mm}
\begin{equation}\label{eqn:latent_update}
\tilde{z}_t^{k} = \mu_\phi(z_t) + \gamma\sigma_\phi(z_t) \odot \text{VJP}(\hat{a}_t^{k}, z_t),
\end{equation}
where $\gamma$ controls VJP strength (see ablation studies in Appendix~\ref{sec:ablation_vjp} for its effect). 
This formulation allows the latent observations to evolve with the action refinements; the variational lower bound for the corresponding optimization is detailed in Section~\ref{sec:elbo}.

\vspace{-2mm}
\subsection{Cycle-Consistent Contrastive Learning}
\vspace{-1mm}
While VJP-guided SDEs enable dynamic latent evolution, they do not guarantee coherent alignment with the action diffusion, which is significant for achieving perception–action interplay.
To bridge this gap, we introduce a cycle-consistent contrastive loss that aligns noise predictions with evolving latent dynamics at each diffusion step $k$:
\begin{itemize} [leftmargin=*]
\vspace{-2mm}
    \item $\varepsilon_k=\epsilon_{\theta}(\hat{a}_t^k, z_t, k)$ is the noise prediction conditioned on the static latent $z_t$ at the step $k$;
\vspace{-1mm}
    \item $\tilde{\varepsilon}_k=\epsilon_{\theta}(\hat{a}_t^k, \tilde{z}_t^k, k)$ is the noise prediction conditioned on the VJP-guided latent $\tilde{z}_t^k$ at the step $k$.
\vspace{-1mm}
\end{itemize}
An InfoNCE~\citep{chen2020simple} loss promotes the similarity of matched pairs while pushing apart mismatched ones within the same mini-batch (see Appendix~\ref{sec:ablation_cont} for temperature parameter tuning):
\vspace{-1mm}
\begin{equation}\label{eqn:contrastive}
\mathcal{L}_{\text{cont}} = -\frac{1}{B}\sum_{i=1}^B \log \frac{\exp\left( \text{sim}\left( \varepsilon^i_k, \tilde{\varepsilon}^i_k \right) / \tau \right)}{\sum_{j\neq i} \exp\left( \text{sim}\left( \varepsilon^i_k, \tilde{\varepsilon}^j_k \right) / \tau \right)},
\end{equation}
where $\text{sim}(u, v) = \frac{u^\top v}{||u| ||v||}$ is cosine similarity, with $\tau$ as the temperature and $B$ the mini-batch size.

Our contrastive loss enforces cycle consistency between perception and action: noise predictions conditioned on static latents guide continuous latent updates via VJP, and the refined latents improve subsequent noise predictions. 
Unlike MSE-based objectives that rigidly match features, the contrastive loss promotes a clustering effect: it pulls evolved features toward their static counterparts to preserve semantic grounding, while pushing them away from features of other observations to maintain task-relevant distinctions.
This bounded adaptation keeps updates semantically meaningful, enabling smooth refinement without arbitrary drift and reinforcing coherence in both latent and action trajectories (see Section~\ref{sec:cont_theory} for theory and Section~\ref{sec:exp_irregular} for empirical validation).

\vspace{-2mm}
\subsection{Training Objective}
\vspace{-1mm}
The overall training objective combines three components: noise matching from DPs, variational optimization for latent evolution, and contrastive loss for the perception-action interplay:
\vspace{-1mm}
\begin{equation}
\mathcal{L}_{\text{DP-AG}} = \mathcal{L}_{\text{DP}} + \lambda_{\text{cont}} \mathcal{L}_{\text{cont}} + \lambda_{\text{KL}} \mathcal{L}_{\text{KL}}.
\end{equation}
Jointly optimizing these terms enables DP-AG to co-evolve perception and action, resulting in more adaptive policies (see Appendix~\ref{sec:ablation_kl} for ablation studies on hyperparameter tuning).

\vspace{-2mm}
\subsection{Intuitions Behind DP-AG}
\vspace{-1mm}
We now explain why DP-AG works in practice.
A static encoder yields a one-shot representation of the observation that stays fixed during action generation, forcing the policy to commit to an interpretation before actions unfold. 
DP-AG instead refines the latent representation step by step, guided by feedback from the evolving action sequence.
Think of driving a car: the view through the windshield may remain unchanged, yet the driver’s focus shifts as they act, paying more attention to the curve’s edge while turning, or to nearby cars when accelerating.

The VJP supplies both the {\em direction} and {\em magnitude} of this latent drift: it points features toward the adjustments that most reduce action uncertainty, with larger updates when the model is less confident and smaller ones when it is certain. 
Meanwhile, the cycle-consistent contrastive loss anchors these adaptations, keeping updated features aligned with their static counterparts and preventing excessive drift. 
Together, this {\bf action-driven refinement} and {\bf bounded consistency} yield smoother, more stable, and context-aware trajectories, as confirmed in our experiments.

\vspace{-1mm}
\section{Theoretical Insights}
\vspace{-2mm}
\subsection{Action-Guided Variational Lower Bound}\label{sec:elbo}
\vspace{-1mm}
To learn a dynamic latent observation that evolves with the action diffusion, we define a posterior $q_{\phi}(\tilde{z}_t^k|z_t,\hat{a}_t^k)$ conditioned on both the static latent $z_t$ and the denoised action $\hat{a}_t^k$ at step $k$.

\textbf{Variational Objective.}
The marginal likelihood of the observed noise throughout the diffusion is:
\vspace{-1mm}
\begin{equation}
    \log p(\varepsilon_k|z_t) = \log \int p(\tilde{\varepsilon}_k | \tilde{z}_t^k) \, p(\tilde{z}_t^k | z_t) \, d\tilde{z}_t^k.
\end{equation}
Applying Jensen’s inequality with the posterior, we obtain the ELBO:
\begin{equation}\label{eqn:elbo}
    \log p(\varepsilon_k | z_t) \geq \mathbb{E}_{q_\phi(\tilde{z}_t^k | z_t, \hat{a}_t^k)} \left[ \log p(\tilde{\varepsilon}_k | \tilde{z}_t^k) \right] - \text{KL}\left( q_\phi(\tilde{z}_t^k | z_t, \hat{a}_t^k) \, \| \, p(\tilde{z}_t^k | z_t) \right).
\end{equation}
Maximizing the ELBO encourages $\tilde{z}_t^k$ to capture action-driven updates while staying close to the prior conditioned on $z_t$.
Full derivation of the ELBO is provided in Appendix~\ref{sec:elbo_proof}.

\textbf{Likelihood Term.}
The likelihood term $p(\tilde{\varepsilon}_k|\tilde{z}_t^k)$ models the distribution of action noise conditioned on the evolving latent $\tilde{z}_t^k$, and can be optimized using a noise matching loss similar to DP: 
\begin{equation}\label{eqn:lh}
    \mathcal{L}_{\text{LH}} = \mathbb{E}_{(o_t, a_t) \sim \mathcal{D}, k \sim \mathcal{U}(1, K)} \left[ \| \epsilon_\theta(\hat{a}_t^{k}, \tilde{z}_t^{k}, k) - \epsilon \|^2_2 \right].
\end{equation}
This likelihood term encourages absolute accuracy in noise prediction but overlaps with the contrastive loss that emphasizes relative similarity (see the detailed ablation in Appendix~\ref{sec:ablation_likelihood}).
To avoid redundancy and instability, we use only the contrastive loss.

\textbf{Kullback–Leibler Divergence Term.}
The variational posterior is modeled as:
\begin{equation}
    q_\phi(\tilde{z}_t^k | z_t, \hat{a}_t^k) = \mathcal{N}(\mu_\phi(z_t, \hat{a}_t^k), \sigma_\phi^2(z_t, \hat{a}_t^k)),
\end{equation}
while the conditional prior can be modeled as a Gaussian distribution:
\begin{equation}
    p(\tilde{z}_t^k | z_t) = \mathcal{N}(z_t, I),
\end{equation}
which encourages the evolved latent to stay near the static latent initially encoded from the observation.
The Kullback–Leibler (KL) divergence thus becomes (derivation of Equation~\ref{eqn:kl} is in Appendix~\ref{sec:kl_app}):
\vspace{-1mm}
\begin{equation}
\label{eqn:kl}
    \text{KL}( q_\phi(\tilde{z}_t^k | z_t, \hat{a}_t^k) \, \| \, p(\tilde{z}_t^k | z_t) )
= \frac{1}{2} \sum_{i=1}^d \left( \sigma_{\phi,i}^2 + (\mu_{\phi,i} - z_{t,i})^2 - 1 - \log \sigma_{\phi,i}^2 \right),
\end{equation}
where $d$ is the latent dimension, and $\mu_{\phi}(\cdot)$ and $\log \sigma_{\phi}^2(\cdot)$ are parameterized by a separate linear layer.
\vspace{-2mm}
\subsection{Contrastive Similarity Promotes Latent-Action Continuity}\label{sec:cont_theory}
\vspace{-1mm}
In this section, we theoretically demonstrate that minimizing our cycle-consistent InfoNCE loss leads to smooth and coherent transitions in both latent and action spaces.
The key insight is that minimizing the contrastive loss enforces strong similarity between static and dynamic noise predictions; under Lipschitz continuity, this similarity tightly bounds the drift in latent dynamics.

\textbf{Noise Similarity Lower Bound.}
We first show that minimizing the contrastive loss imposes a lower bound on the similarity between noise predictions $\varepsilon^i_k$ and $\tilde{\varepsilon}^i_k$.
Proof of Lemma~\ref{lem:cluster} is in Appendix~\ref{sec:cluster_proof}.
\begin{lemma}[Noise Similarity Lower Bound]
\label{lem:cluster}
For unit-normalized vectors $\varepsilon^i_k$ and $\tilde{\varepsilon}^i_k$, and a temperature $\tau > 0$, if the InfoNCE loss satisfies $\mathcal{L}_{\text{cont}} \leq \alpha$ for some small constant $\alpha$, then for each $i \in \{1, \dots, B\}$, the similarity between corresponding pairs is bounded accordingly:
\begin{equation}
\underbrace{\mathrm{sim}\left(\varepsilon^i_k, \tilde{\varepsilon}^i_k\right)}_{\text{positive pair similarity}} \geq \tau (\ln(B-1) - \alpha) - 1.
\end{equation}
\vspace{-5mm}
\end{lemma}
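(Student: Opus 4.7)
The plan is to unpack the InfoNCE loss into per-term contributions, bound the log-sum-exp over negatives from below using the unit-norm constraint on the embeddings, and then rearrange the resulting inequality to isolate the positive-pair similarity.

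First, I would rewrite each summand of $\mathcal{L}_{\text{cont}}$ in its natural additive form,
\[
\ell_i = -\frac{1}{\tau}\,\text{sim}(\varepsilon^i_k, \tilde{\varepsilon}^i_k) + \log \sum_{j\neq i} \exp\!\left(\frac{\text{sim}(\varepsilon^i_k, \tilde{\varepsilon}^j_k)}{\tau}\right),
\]
and work under the (per-sample) reading of the hypothesis $\mathcal{L}_{\text{cont}} \leq \alpha$, so that $\ell_i \leq \alpha$ for each $i$. This step is essentially bookkeeping but fixes the quantity that the rest of the argument will control.

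Next, I would lower-bound the log-sum-exp term using only the normalization assumption. Since $\varepsilon^i_k$ and $\tilde{\varepsilon}^j_k$ lie on the unit sphere, cosine similarity takes values in $[-1,1]$, and hence $\exp(\text{sim}/\tau) \geq \exp(-1/\tau)$ for every negative pair. Summing over the $B-1$ negatives yields
\[
\log \sum_{j\neq i} \exp\!\left(\frac{\text{sim}(\varepsilon^i_k, \tilde{\varepsilon}^j_k)}{\tau}\right) \geq \ln(B-1) - \frac{1}{\tau}.
\]
Substituting this into $\ell_i \leq \alpha$ and solving for the positive-pair similarity gives exactly $\text{sim}(\varepsilon^i_k, \tilde{\varepsilon}^i_k) \geq \tau(\ln(B-1) - \alpha) - 1$, as claimed.

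The main subtle point---and the one I would spend the most care on---is the passage from the aggregate bound on $\mathcal{L}_{\text{cont}}$ to the per-$i$ statement. A naive averaging argument is lossy because each $\ell_i$ is itself lower-bounded by $\ln(B-1) - 2/\tau$, so a particularly well-clustered sample can mask another sample's failure; the cleanest resolution is to read the hypothesis as controlling each sample's contribution, which is the standard interpretation for InfoNCE-style guarantees in this literature. A minor secondary check is tightness of the step $\text{sim} \geq -1$: it is invoked only in the worst case, and any additional structural assumption on the negative set (for instance, empirical mean near zero on the sphere) would sharpen the bound, but is not required for the statement as written.
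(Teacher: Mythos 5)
Your computation matches the paper's core steps: both arguments lower-bound the negative-pair denominator using $\mathrm{sim}\geq -1$ on the unit sphere, giving $\sum_{j\neq i}\exp(s_{ij}/\tau)\geq (B-1)e^{-1/\tau}$, and then solve the resulting inequality for $s_{ii}$, producing the identical bound $\tau(\ln(B-1)-\alpha)-1$. Where you diverge from the paper is exactly the point you flag as subtle. The paper keeps the aggregate hypothesis $\mathcal{L}_{\text{cont}}\leq\alpha$, introduces $p_i=\exp(s_{ii}/\tau)/\sum_{j\neq i}\exp(s_{ij}/\tau)$, observes $\prod_i p_i\geq e^{-B\alpha}$, and then, under the banner of AM--GM, declares that one may ``assume $p_i$ is equal across all $i$'' so that each $p_i\geq e^{-\alpha}$. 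That step is not a consequence of AM--GM: a product constraint does not forbid one $p_i$ from being small while others compensate, so a uniform per-$i$ bound simply does not follow from the aggregate hypothesis as stated. You instead adopt a per-sample reading $\ell_i\leq\alpha$, which makes the per-$i$ conclusion immediate and the rest of the derivation airtight, at the price of strengthening the premise relative to the batch-averaged InfoNCE loss in Equation~\ref{eqn:contrastive}. In short, the mechanics agree, but your treatment of the aggregate-to-per-$i$ passage is the honest one; the paper's AM--GM argument does not actually close that gap, and a fully rigorous proof under the averaged hypothesis would either need the per-sample assumption you make explicit or settle for a weaker bound that degrades with $B$.
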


\textbf{Continuity Induced by Noise Alignment.}
We then demonstrate that the lower bound on the noise similarity provides an upper bound on the latent drift, meaning that the contrastive alignment also leads to smooth transitions in the latent space.
Proof of Theorem~\ref{prop:mutual} is in Appendix~\ref{sec:mutual_proof}.
\begin{theorem}[Continuity Upper Bound]
\label{prop:mutual}
Suppose $\epsilon_\theta(\cdot)$ is L-Lipschitz with respect to $z$, and that $\tilde{z}_t^{k+1}$ evolves via the VJP of $\epsilon_\theta$. 
Then under the contrastive constraint $\mathcal{L}_{\text{cont}} \leq \alpha$, we have:
\begin{equation}
\|\tilde{z}_t^{k+1} - \tilde{z}_t^k\|^2_2
\leq L^2\|\epsilon_\theta(\hat{a}_t^k, z_t, k) - \epsilon_\theta(\hat{a}_t^k, \tilde{z}_t^{k}, k)\|^2_2
\leq 2L^2(2 - \tau \ln(B-1) + \tau \alpha).
\end{equation}
\end{theorem}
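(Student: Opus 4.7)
The plan is to split the claimed chain of inequalities into two pieces that meet at the squared noise-prediction difference $\|\varepsilon_k^i - \tilde{\varepsilon}_k^i\|_2^2$: the left-hand piece bounds the latent drift by this difference via Lipschitz control of the VJP, while the right-hand piece converts the difference into an explicit numerical constant via Lemma~\ref{lem:cluster}. Stitching the two yields the double inequality.

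For the left-hand inequality, I would start by unfolding the VJP-driven update that defines $\tilde{z}_t^{k+1}$. Because the VJP applies the Jacobian $\partial\epsilon_\theta/\partial z$, whose operator norm is at most $L$ under the Lipschitz hypothesis, I would rewrite the one-step increment $\tilde{z}_t^{k+1} - \tilde{z}_t^k$ so that this Jacobian acts on the difference $\epsilon_\theta(\hat{a}_t^k, z_t, k) - \epsilon_\theta(\hat{a}_t^k, \tilde{z}_t^k, k)$. Sub-multiplicativity of the Euclidean norm, $\|A^{\!\top} v\|_2 \le \|A\|_{\mathrm{op}} \|v\|_2 \le L\|v\|_2$, followed by squaring, then delivers the left inequality with the $L^2$ prefactor intact.

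For the right-hand inequality, I would exploit the unit-normalization assumption of Lemma~\ref{lem:cluster}, under which $\|u - v\|_2^2 = 2 - 2\,\mathrm{sim}(u, v)$. Plugging in the lower bound $\mathrm{sim}(\varepsilon_k^i, \tilde{\varepsilon}_k^i) \ge \tau(\ln(B-1) - \alpha) - 1$ from Lemma~\ref{lem:cluster} immediately gives $\|\varepsilon_k^i - \tilde{\varepsilon}_k^i\|_2^2 \le 2(2 - \tau \ln(B-1) + \tau\alpha)$. Multiplying by $L^2$ closes the chain and matches the stated bound exactly.

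The main obstacle will be justifying the left step rigorously. The Lipschitz hypothesis naturally controls $\|\epsilon_\theta(z) - \epsilon_\theta(z')\|_2$ by $L\|z - z'\|_2$, whereas the theorem requires the \emph{opposite} direction: a displacement in latent space is to be bounded by a difference in noise predictions. Making this work requires recasting the VJP update so that the Jacobian multiplies the gap $\varepsilon_k - \tilde{\varepsilon}_k$ rather than $\varepsilon_k$ itself, which I expect to do via a first-order expansion of $\epsilon_\theta$ around $\tilde{z}_t^k$ together with careful bookkeeping of the SDE step size and the $\gamma\sigma_\phi$ prefactor so that they are absorbed cleanly into $L^2$. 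Isolating the Lipschitz constant without introducing hidden assumptions is the delicate bookkeeping part of the argument.
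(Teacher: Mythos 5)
Your proposal follows the paper's proof exactly: Step~1 bounds the latent drift by the noise-prediction gap via the Lipschitz/VJP structure (with step-size and $\gamma\sigma_\phi$ prefactors absorbed into $L$), and Step~2 converts that gap to the explicit constant using the identity $\|u-v\|_2^2 = 2\bigl(1-\mathrm{sim}(u,v)\bigr)$ for unit vectors together with Lemma~\ref{lem:cluster}; both match. The obstacle you flag in the left inequality is real but is also present in the paper's own proof, which simply asserts $\|\tilde{z}_t^{k+1}-\tilde{z}_t^k\|_2 \le L\|\epsilon_\theta(\hat{a}_t^k,z_t,k)-\epsilon_\theta(\hat{a}_t^k,\tilde{z}_t^k,k)\|_2$ after stating that the step size is ``absorbed into the $L$-Lipschitz constant,'' without a first-order expansion — so your more careful framing of the bookkeeping there, if anything, surfaces a looseness the paper glosses over rather than a flaw in your approach. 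One small clarification: the issue in the first step is not that the Lipschitz bound is used in the ``opposite'' direction (it is used forward, as $\|J^\top v\|\le L\|v\|$ since the Jacobian operator norm is $\le L$), but rather that the VJP drift is $J^\top\varepsilon_k$, not $J^\top(\varepsilon_k-\tilde\varepsilon_k)$, so producing exactly the claimed difference inside the norm requires the informal step-size/normalization argument both you and the paper invoke.
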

\vspace{-2mm}
Together, Lemma~\ref{lem:cluster} and Theorem~\ref{prop:mutual} establish a theoretical connection between minimizing our cycle-consistent contrastive loss and promoting temporal continuity in both latent and action trajectories.

\vspace{-1mm}
\section{Empirical Evaluations}
\vspace{-2mm}
\subsection{Latent and Action Continuity on an Irregular Time-Series Regression}\label{sec:exp_irregular}
\vspace{-1mm}

\begin{figure}[t]
\centering
\includegraphics[width=1.0\textwidth]{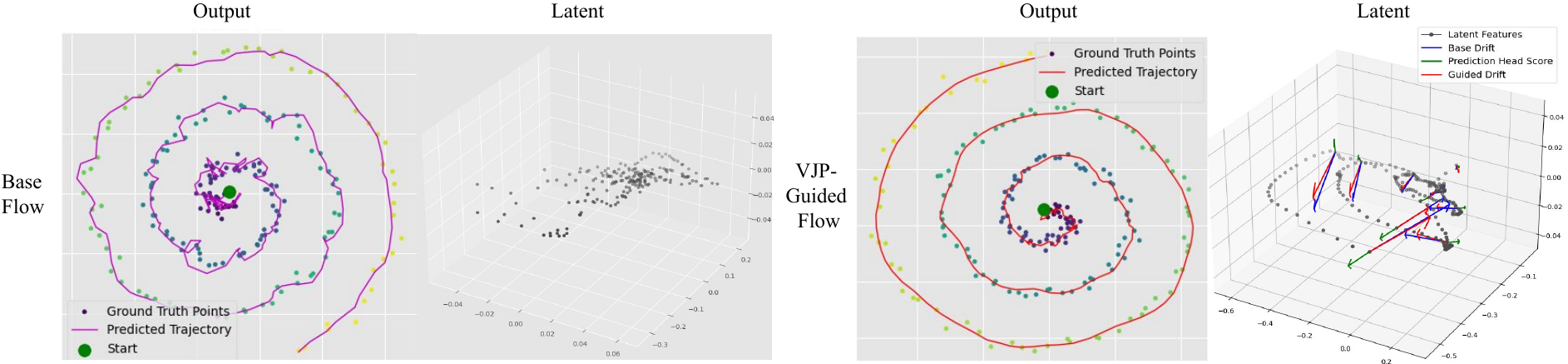}
\vspace{-5mm}
\caption{{\bf Regression results on irregular spirals.} {\bf Left}: Trajectories and latent dynamics predicted by the Base Flow. {\bf Right}: The VJP-Guided Flow continuously refines latents through output-guided corrections, which results in smoother and more coherent trajectories in both output and latent spaces.}
\label{fig:synthetic}
\vspace{-3mm}
\end{figure}

To empirically validate DP-AG’s ability to improve latent and action continuities, we conducted experiments on the irregular spiral dataset, originally introduced in the neural ODE evaluations~\citep{chen2018neural}. 
This dataset is designed to evaluate continuous-time dynamics under irregular sampling, making it well-suited for evaluating the impact of the VJP-guided evolution on latent dynamics.

\textbf{Dataset.}
Following~\citep{chen2018neural}, the dataset contains 1,000 samples of 2D spiral trajectories, with half following a clockwise pattern and the other half counterclockwise, each sampled at 100 irregular time points.
The spiral exhibits continuous radial growth with Gaussian noise ($\sigma \in [0.02, 0.1]$) added to simulate real-world variability with randomly assigned angular velocities.

\textbf{Method.} 
We compare two models:
\vspace{-2mm}
\begin{itemize}[leftmargin=*]
\item \textbf{Base Flow:} An LSTM with 128 hidden units and 2 layers is used to extract discrete latent features $z_t$, followed by a Multi-Layer Perceptron (MLP) to predict the regression output $y_\theta(z_t)$.
\item \textbf{VJP-Guided Flow:} Augmenting the base flow with a VJP-guided SDE, where the latent feature $\tilde{z}_t$ evolves according to: $d\tilde{z}_t = \left( \frac{\partial y_{\theta}(z_t)}{\partial z_t} \right)^\top y_\theta(z_t) \, dt + \sigma_{\phi}(z_t) \, dW_t$ with the base drift $\mu_\phi(z_t)$ and base log-variance $\log \sigma^2_\phi(z_t)$ are each predicted by a separate linear layer.
\end{itemize}
\vspace{-2.5mm}
Both models are trained for 100 epochs with Adam optimizer (learning rate $10^{-3}$ and batch size $32$).

\textbf{Results.}
As shown in Figure~\ref{fig:synthetic}, VJP-Guided Flow generates smoother and more coherent trajectories than the Base Flow, reducing the MSE from 0.0095 to 0.0052 (a 45.3\% improvement). 
Importantly, latent visualizations show that while the Base Flow yields scattered latent states, VJP guidance shapes a structured latent manifold aligned with output predictions. 
This illustrates our perception-action interplay: predictions provide gradient feedback that dynamically refines latent embeddings through the VJP. 
As a result, latent embeddings and predictions evolve in synchrony, enhancing continuity in both representation and regression, which is consistent with our theoretical findings in Theorem~\ref{prop:mutual}.

\begin{figure}
    \centering
    \includegraphics[width=0.8\textwidth]{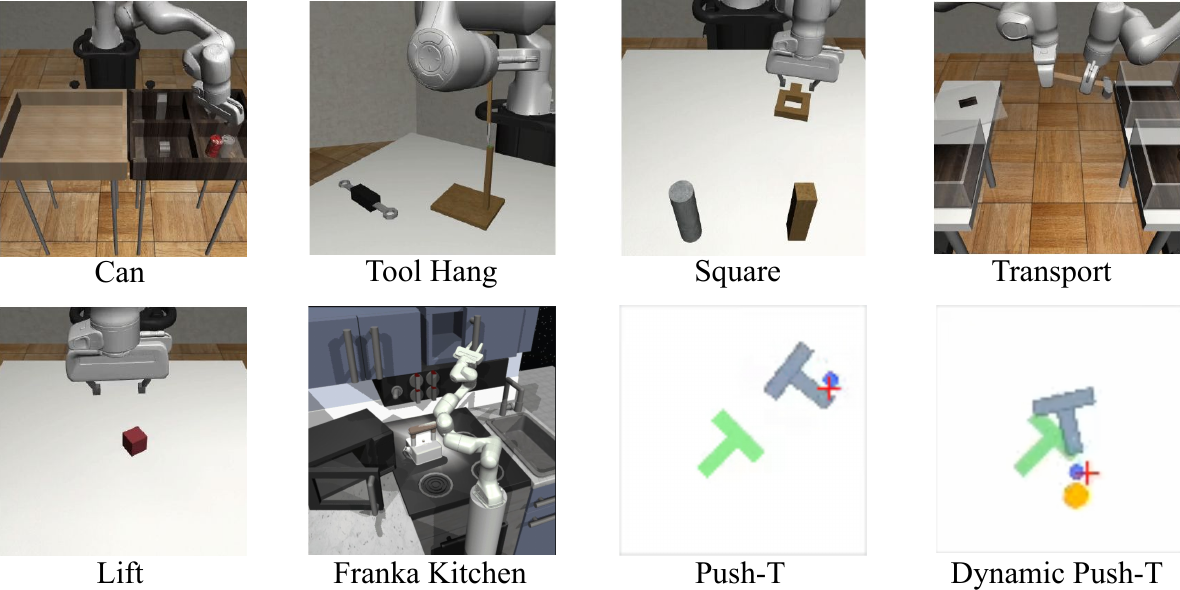} 
    \vspace{-1mm}
    \caption{Benchmark simulation environments include Robomimic, Franka Kitchen, Push-T, and Dynamic Push-T, with task and dataset details provided in Appendix~\ref{sec:dataset}.}
    \label{fig:simulation}
\vspace{-3mm}
\end{figure}

\begin{table}
\centering
\fontsize{8}{10}\selectfont
\setlength{\tabcolsep}{8pt}
\caption{{\bf Target coverage score on Push-T and Dynamic Push-T tasks.} 
\textbf{img} and \textbf{kp} refer to the observation modalities: RGB images or 2D keypoints.
}
\label{tab:pushT}
\begin{tabular}{llcc|c}
\toprule
Type & Method & \multicolumn{2}{c|}{Push-T} & Dynamic Push-T \\
\cmidrule(lr){3-4}
& & img & kp & img \\
\midrule
& LSTM-GMM [Mandlekar, 2022] & 0.69$\pm$0.02 & 0.67$\pm$0.03 & 0.34$\pm$1.24  \\
Mapping & IBC [Florence, 2022] & 0.75$\pm$0.02 & 0.90$\pm$0.02 & 0.52$\pm$0.98 \\
& BET [Shafiullah, 2022] & 0.80$\pm$0.02 & 0.79 $\pm$0.02& 0.58$\pm$1.35  \\
\hline
Flow & FlowPolicy [Zhang, 2025] & 0.85$\pm$0.01 & 0.88$\pm$0.01 & 0.53$\pm$0.88 \\
 & AdaFlow [Hu, 2024] & 0.87$\pm$0.02 & 0.91$\pm$0.01 & 0.67$\pm$0.79 \\
\hline
Diffusion & DP [Chi, 2023] & 0.87$\pm$0.04 & 0.95$\pm$0.03 & 0.65$\pm$0.85 \\
\rowcolor{gray!20}
 & DP-AG (ours) & \textbf{0.93}$\pm$0.02 & {\bf 0.99}$\pm$0.01 & {\bf 0.80}$\pm$0.53 \\
\hline
\end{tabular}
\vspace{-3mm}
\end{table}

\begin{table}[t]
\centering
\fontsize{7.5}{9}\selectfont
\setlength{\tabcolsep}{2.7pt}
\caption{
{\bf Success rates across Robomimic and Franka Kitchen tasks.}
\textbf{ph}: proficient human demos; \textbf{mh}: mixed-quality demos; {\bf t1} to {\bf t4} denote task IDs in Franka Kitchen.
}
\label{tab:combined}
\begin{tabular}{llccccccccccccccc}
\toprule
Type & Method & \multicolumn{2}{c}{Lift} & \multicolumn{2}{c}{Can} & \multicolumn{2}{c}{Square} & \multicolumn{2}{c}{Transport} & ToolHang & \multicolumn{4}{c}{Franka Kitchen} \\
\cmidrule(lr){3-4} \cmidrule(lr){5-6} \cmidrule(lr){7-8} \cmidrule(lr){9-10} \cmidrule(lr){12-15}
& & ph & mh & ph & mh & ph & mh & ph & mh & ph & t1 & t2 & t3 & t4 \\
\midrule
& LSTM-GMM [Mandlekar, 2022] & \textbf{1.00} & \textbf{1.00} & \textbf{1.00} & 0.98 & 0.82 & 0.64 & 0.88 & 0.44 & 0.68 & 1.00 & 0.90 & 0.74 & 0.34 \\
Mapping & IBC [Florence, 2022] & 0.94 & 0.39 & 0.08 & 0.00 & 0.03 & 0.00 & 0.00 & 0.00 & 0.00 & 0.99 & 0.87 & 0.61 & 0.24 \\
& BET [Shafiullah, 2022] & \textbf{1.00} & \textbf{1.00} & \textbf{1.00} & \textbf{1.00} & 0.76 & 0.68 & 0.38 & 0.21 & 0.58 & 0.99 & 0.93 & 0.71 & 0.44 \\
\hline
Flow & FlowPolicy [Zhang, 2025] & 0.98 & 0.95 & 0.98 & 0.98 & 0.86 & 0.90 & 0.88 & 0.82 & 0.85 & 0.96 & 0.86 & 0.95 & 0.87 \\
 & AdaFlow [Hu, 2024] & \textbf{1.00} & \textbf{1.00} & \textbf{1.00} & 0.96 & 0.98 & 0.96 & 0.92 & 0.80 & 0.88 & 0.99 & 0.89 & 0.92 & 0.83 \\
\hline
Diffusion & DP [Chi, 2023] & \textbf{1.00} & \textbf{1.00} & \textbf{1.00} & \textbf{1.00} & 0.98 & 0.98 & \textbf{1.00} & 0.89 & 0.95 & \textbf{1.00} & \textbf{1.00} & \textbf{1.00} & 0.99 \\
\rowcolor{gray!20}
& DP-AG (ours) & \textbf{1.00} & \textbf{1.00} & \textbf{1.00} & \textbf{1.00} & \textbf{1.00} & \textbf{1.00} & \textbf{1.00} & \textbf{0.94} & \textbf{0.98} & \textbf{1.00} & \textbf{1.00} & \textbf{1.00} & \textbf{1.00} \\
\hline
\end{tabular}
\vspace{-3mm}
\end{table}

\vspace{-2mm}
\subsection{Experiments on Simulation Benchmarks}
\vspace{-1mm}

We then evaluate our DP-AG on the simulation benchmarks, illustrated in Figure~\ref{fig:simulation}.

\textbf{Robomimic.}
Robomimic~\citep{mandlekar22a} is a large-scale benchmark for robotic manipulation consisting of five tasks with nine datasets: Can (ph/mh), Square (ph/mh), Transport (ph/mh), Tool Hang (ph), and Lift (ph/mh).
Here, {\bf ph} denotes proficient human demonstrations, and {\bf mh} indicates a mix of proficient and non-proficient demonstrations.

\textbf{Franka Kitchen.} 
Franka Kitchen~\citep{gupta2020} is a simulation benchmark with a 9-DoF Franka arm performing four household tasks ({\bf t1}, {\bf t2}, {\bf t3}, and {\bf t4}) per trajectory, using 566 human demonstrations across 7 interactive objects.
Available through platforms like Gymnasium-Robotics~\footnote{\url{https://robotics.farama.org/envs/franka_kitchen/franka_kitchen/}}.

\textbf{Push-T.}
Push-T~\citep{chi2023diffusion} is a manipulation task adapted from IBC~\citep{florence2022implicit}, where a circular end-effector pushes a T-shaped block to a target location.
Both the block and end-effector start at random positions.
Observations consist of either RGB images ({\bf img}) or a set of nine 2D keypoints ({\bf kp}) outlining the T block’s shape, along with the position of the end-effector.

\textbf{Dynamic Push-T (Ours).}
Many IL benchmarks are nearly saturated: they are deterministic, scripted, and lack diversity, often solvable without closed-loop feedback~\citep{jia2024towards}.
To evaluate real-time adaptability, we propose \emph{Dynamic Push-T}, which augments Push-T with a moving ball that bounces unpredictably and intermittently interferes with manipulation.
Since the ball’s trajectory varies in each episode, the agent must adapt online, blocking or avoiding the ball while pursuing the main objective. 
This develops a dynamic and unscripted challenge that evaluates true adaptability beyond replaying demonstrations.
Details of this benchmark are provided in Appendix~\ref{sec:dataset}.

\begin{figure}
    \centering
    \includegraphics[width=1.0\textwidth]{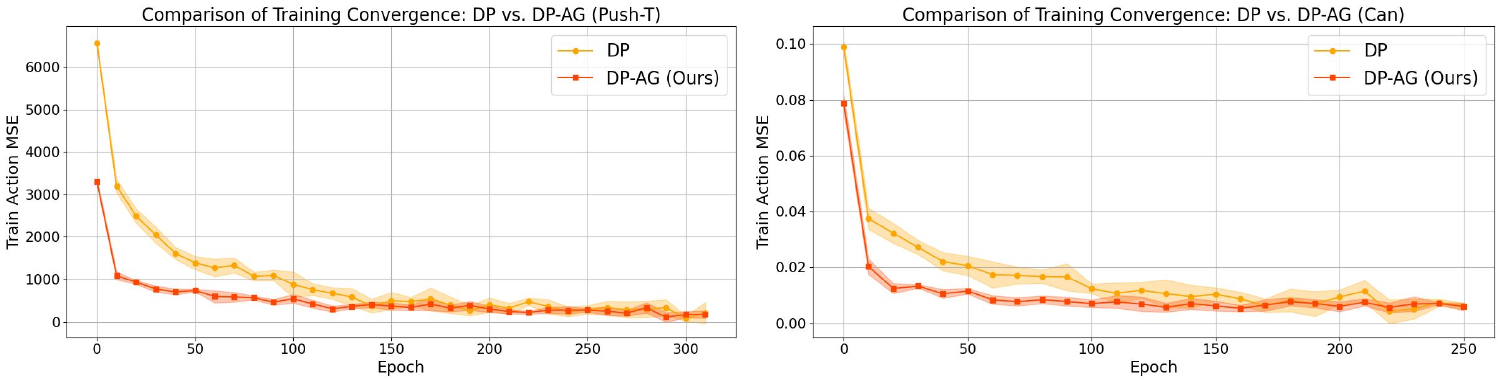}
    \caption{{\bf Convergence Plots.} Training action MSE over epochs on Push-T and Robomimic Can.}
    \label{fig:convergence}
    \vspace{-4mm}
\end{figure}

\textbf{Baselines and Evaluation Metrics.}
We compare DP-AG against IL baselines that include direct mapping (LSTM-GMM~\citep{mandlekar22a}, IBC~\citep{florence2022implicit}, BET~\citep{shafiullah2022behavior}), diffusion-based (DP~\citep{chi2023diffusion}), and flow-matching methods (AdaFlow~\citep{hu2024adaflow}, FlowPolicy~\citep{zhang2025flowpolicy}).
Results are from either our reimplementation or the original papers, averaged over 5 training seeds and multiple evaluation seeds (50 for Push-T/Kitchen, 22 for Robomimic, 50 for Dynamic Push-T).
We use success rate as the main metric, with target area coverage for Push-T variants.
Additional implementation details are in Appendix~\ref{sec:implementation}.

\textbf{Results.}
Tables~\ref{tab:pushT} and~\ref{tab:combined} show that DP-AG consistently outperforms baselines across Robomimic, Franka Kitchen, Push-T, and Dynamic Push-T, achieving near-perfect success on static tasks and the highest coverage on dynamic ones. 
These gains come from enforcing smooth latent and action trajectories, which are especially important in Dynamic Push-T.
DP-AG also converges faster than DP (Figure~\ref{fig:convergence} with additional plots in Appendix~\ref{sec:more_convergence}). 
Ablation results are provided in Appendix~\ref{sec:ab}, and computational analysis is detailed in Appendix~\ref{sec:ablation_computation}.

\vspace{-2mm}
\subsection{Real-World Evaluation on UR5 Robot Arm}
\vspace{-1mm}
\begin{figure}
    \centering
    \includegraphics[width=1.0\textwidth]{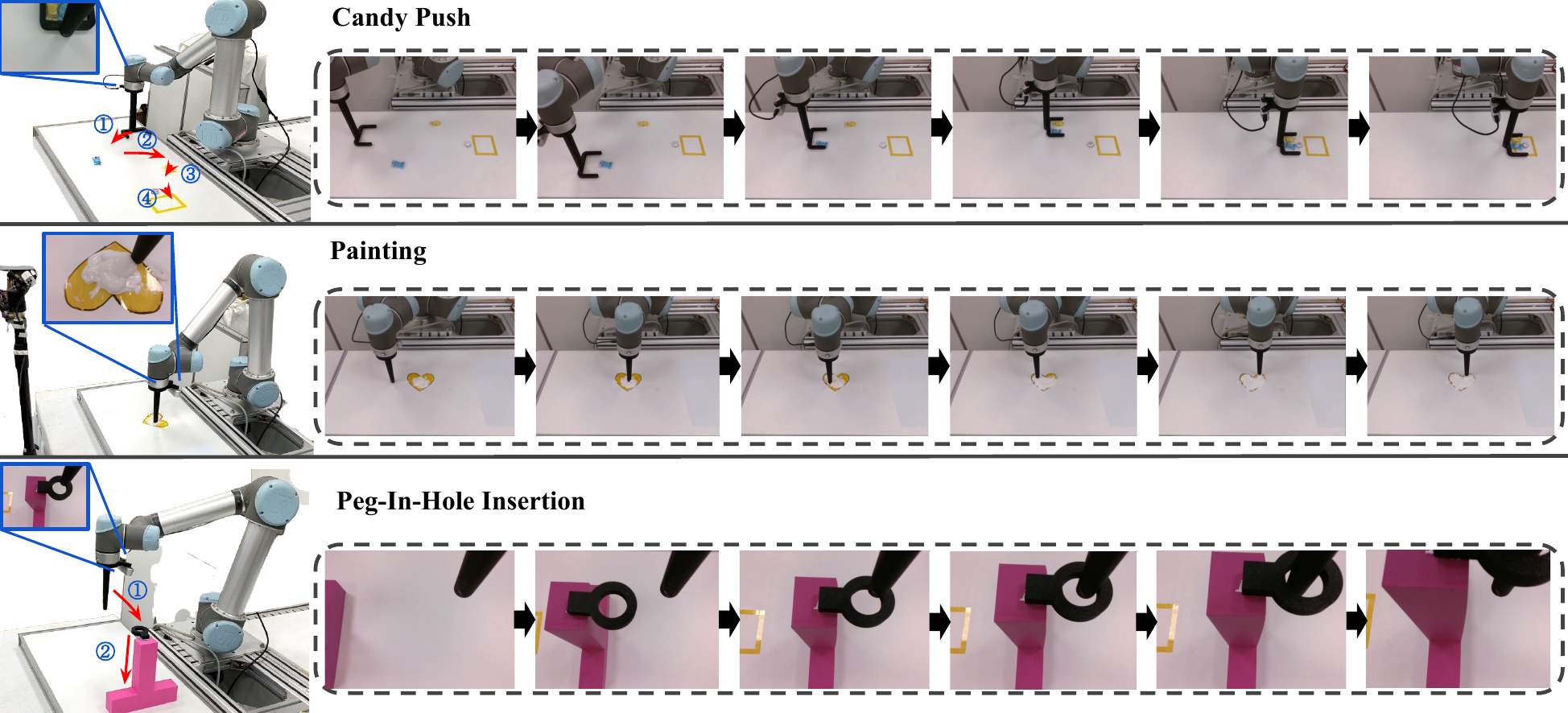}
    \vspace{-5mm}
    \caption{Real-world evaluation on a UR5 robot arm across three manipulation tasks.}
    \label{fig:realworld}
\vspace{-5mm}
\end{figure}

To evaluate the real-world performance of our DP-AG, we deploy DP and DP-AG policies onto a UR5 robotic arm in three visuomotor manipulation tasks: \textbf{Painting}, \textbf{Candy Push}, and \textbf{Peg-In-Hole Insertion} (Figure~\ref{fig:realworld}). 
These tasks are designed to evaluate different aspects of perception–action interplay and the policy’s ability to generalize from partially observable inputs.
The details (dataset, training, etc.) of the real-world manipulation experiments are provided in Appendix~\ref{sec:real-world}.

\textbf{Painting (Planar Precision).} The robot is tasked with tracing heart-shaped and circular-shaped paths using a paintbrush. 
This task requires smooth and continuous motion to avoid over-painting or streaking.
Our DP-AG improves both path fidelity and smoothness compared to DP.

\textbf{Candy Push (Object-aware Adaptation).}
The end-effector pushes small candies into a designated goal area. 
Object positions are randomized across trials. 
DP-AG’s latent evolution allows it to adapt to variations in candy layouts and locations, which results in fewer collisions and smoother trajectories.

\textbf{Peg-in-Hole (3D Visual Reasoning).} 
The robot must insert a circular peg into a vertical hole using only RGB inputs from scene and wrist cameras, without explicit depth sensing, requiring the policy to infer 3D geometry from indirect cues.
Baseline DP offers no mechanism to adapt when the hole is slightly misaligned. 
Upon contact, it repeatedly executes the same ineffective motion, failing in all trials.
In contrast, DP-AG leverages action feedback: each blocked step triggers a VJP-guided latent update that sharpens sensitivity to contact-region cues ({\em e.g.,} rim alignment). 
Over a few refinements, the same RGB inputs are reinterpreted to expose geometry cues sufficient for correction, enabling successful insertions under occlusion and unseen perturbations.

\begin{table}
\centering
\fontsize{8.55}{10}\selectfont
\setlength{\tabcolsep}{6pt}
\caption{{\bf Performance on Real-World UR5 Tasks.} Mean and standard deviation are reported.}
\label{tab:realworld}
\begin{tabular}{llcccc}
\toprule
Task & Method & Success Rate (\%) & Smoothness (Avg. Jerk) & IoU (\%) & Time to Complete (s) \\
\hline
Painting & DP     & – & $0.083 \pm 0.014$ & $68.9 \pm 5.2$ & $49.5 \pm 4.1$ \\
\rowcolor{gray!20}
& DP-AG  & – & $\mathbf{0.032 \pm 0.009}$ & $\mathbf{92.1 \pm 3.4}$ & $\mathbf{18.0 \pm 3.2}$ \\
\hline
Candy Push & DP     & $65.0 \pm 8.4$ & $0.107 \pm 0.016$ & – & $24.0 \pm 3.9$ \\
\rowcolor{gray!20}
& DP-AG  & $\mathbf{90.0 \pm 5.5}$ & $\mathbf{0.039 \pm 0.011}$ & – & $\mathbf{9.5 \pm 2.6}$ \\
\hline
Peg-in-Hole & DP     & $0.0 \pm 0.0$ & $0.096 \pm 0.017$ & – & – \\
\rowcolor{gray!20}
& DP-AG  & $\mathbf{85.0 \pm 6.0}$ & $\mathbf{0.036 \pm 0.008}$ & – & $\mathbf{13.0 \pm 2.1}$ \\
\hline
\end{tabular}
\vspace{-5mm}
\end{table}

\textbf{Evaluation.} 
We evaluate our DP-AG against DP on the three tasks, each repeated over 20 trials. 
For Painting, we report IoU between the painted and target shapes; 
for Candy Push and Peg-In-Hole, we report success rate. 
Moreover, we measure trajectory smoothness via average jerk and task completion time.
As shown in Table~\ref{tab:realworld}, DP-AG consistently outperforms DP across all metrics. 
Notably, the largest gain occurs in the peg-in-hole task, where DP-AG’s dynamic latent updates help infer 3D information from 2D observations, which is an ability DP lacks due to its static features.

\vspace{-1mm}
\section{Conclusion}
\vspace{-2mm}
We introduce DP-AG, a representation learning framework that closes the perception–action loop in diffusion policies. 
By evolving latent observations through an action-guided SDE, DP-AG transforms diffusion noise gradients into structured perceptual updates via VJPs. 
A cycle-consistent contrastive loss aligns static and evolving latents, enabling continuous and bidirectional coupling between perception and action throughout diffusion. 
We derive a principled ELBO and prove that contrastive alignment enforces mutual continuity in latent and action trajectories. 
Empirically, DP-AG achieves state-of-the-art performance, especially under partial observability and dynamic conditions, demonstrating that perception–action interplay is significant for effective and adaptive policy learning.

\textbf{Broader Impacts.}
DP-AG advances imitation learning by modeling dynamic perception-action interplay, enabling smoother and more context-aware robotic manipulation and reducing the risk of failure or unintended motions.
This capability is particularly valuable in safety-critical domains such as automation, manufacturing, and assistive robotics, where robust decision-making is significant for effective human-robot collaboration. 
However, relying on expert demonstrations can introduce biases or suboptimal behaviors into the policy. To mitigate these risks, it is important to rigorously validate the quality and diversity of demonstration data to minimize unintended actions.

\begin{ack}
This work was partly supported by NSERC Discovery, CFI-JELF, NSERC Alliance, Alberta Innovates and PrairiesCan grants.
\end{ack}

\bibliography{bibtex}

\newpage
\appendix
\phantomsection
\begin{center}
{\LARGE\bfseries Appendix}
\end{center}

\addcontentsline{toc}{section}{Appendix}

\section{Extension to Flow Matching Models}\label{sec:flow-matching}
While we demonstrate DP-AG built upon DDPMs (as in DP), the proposed perception-action interplay can be extended to alternative generative models that produce smooth action trajectories, such as flow matching models~\citep{lipman2023flow, black2024pi_0, zhang2025flowpolicy}.

\textbf{Background on Flow Matching.}
Flow matching formulates action generation as solving an ordinary differential equation (ODE) that deterministically transforms a sample from a known base distribution (e.g., Gaussian) to the target data distribution.
Given a continuous time variable $t\in[0,1]$, the action $a(t)$ evolves according to:
\begin{equation}
\frac{d a(t)}{dt} = v_\theta(a(t), z_t, t),
\end{equation}
where $v_\theta$ denotes a learned velocity field conditioned on observation features $z_t$.
The goal is to match the score of the intermediate marginal distribution $p_t(a)$ at time $t$ by minimizing a regression loss.
Unlike diffusion models, flow matching generates continuous evolution directly without stochastic perturbations.

\textbf{Action-Guided Latent Evolution with Flow Matching.}
To extend DP-AG to flow matching models, we leverage the structure of the learned velocity field $v_\theta$ to guide the latent dynamics.
To be specific, at each time $t$, we define the evolution of the observation feature $\tilde{z}_t$ through a VJP-guided ODE:
\begin{equation}
\frac{d \tilde{z}_t}{dt} = \left( \frac{\partial v_\theta(a(t), z_t, t)}{\partial z_t} \right)^\top v_\theta(a(t), z_t, t),
\end{equation}
where $v_\theta(a(t), z_t, t)$ serves as the action-conditioned driving force, and the VJP propagates this force back to update the observation feature $\tilde{z}_t$.
This setup follows our DP-AG formulation for DDPMs but replaces the stochastic increments from noise predictions with deterministic updates guided by flow matching dynamics.

\textbf{Training Objective.}
Similar to the DP-AG framework, we introduce a variational posterior $q_\phi(\tilde{z}_t | z_t, a(t))$ over the evolved latents and formulate a ELBO:
\begin{equation}
\mathcal{L}_{\text{ELBO}} = \mathbb{E}{q_\phi}\left[ \log p_\theta(v_\theta(a(t), \tilde{z}_t, t)) \right] - \text{KL}\left(q_\phi(\tilde{z}_t | z_t, a(t)) || p(\tilde{z}_t|z_t)\right),
\end{equation}
where $p(\tilde{z}_t|z_t)$ is the base distribution, e.g., isotropic Gaussian centered at $z_t$.
Moreover, the cycle-consistent contrastive loss can be formulated between the velocity fields $v_\theta(a(t), z_t, t)$ and $v_\theta(a(t), \tilde{z}_t, t)$ to promote the consistent evolution between the action and the latent trajectories.

\textbf{Discussion.}
The flow matching extension has several benefits:
\begin{itemize}[leftmargin=*]
    \item Deterministic evolution improves sample efficiency compared to stochastic DDPM training.
    \item The continuous latent trajectories can be directly controlled via $v_\theta$, which simplifies the interpretation.
    \item The perception-action interplay remains: smoother latent evolution produces more coherent actions, and better action flow, in turn, improves the latent dynamics.
\end{itemize}
Thus, DP-AG extends beyond DDPMs, which provide a flexible framework for integrating with generative-model-based policy learning under both stochastic and deterministic dynamics.

\section{Derivation of DP-AG Variational Lower Bound}\label{sec:elbo_proof}

In this section, we show the full derivation of our ELBO using Jensen's inequality:
\begin{equation}
    \log p(\varepsilon_k | z_t) \geq \mathbb{E}_{q_\phi(\tilde{z}_t^k | z_t, \hat{a}_t^k)} \left[ \log p(\tilde{\varepsilon}_k | \tilde{z}_t^k) \right] - \text{KL}\left( q_\phi(\tilde{z}_t^k | z_t, \hat{a}_t^k) \, \| \, p(\tilde{z}_t^k | z_t) \right),
\end{equation}
where $q_\phi(\tilde{z}_t^k|z_t, \hat{a}_t^k)$ is the variational posterior approximating the true posterior.

We start from the exact marginal:
\begin{equation}
    p(\varepsilon_k | z_t) = \int p(\tilde{\varepsilon}_k | \tilde{z}_t^k) p(\tilde{z}_t^k | z_t) \, d\tilde{z}_t^k.
\end{equation}
Inserting the variational posterior $q_\phi(\tilde{z}_t^k|z_t, \hat{a}_t^k)$ via importance sampling:
\begin{equation}
    p(\varepsilon_k | z_t) = \int q_\phi(\tilde{z}_t^k | z_t, \hat{a}_t^k) \, \frac{p(\tilde{\varepsilon}_k|\tilde{z}_t^k)p(\tilde{z}_t^k|z_t)}{q_\phi(\tilde{z}_t^k|z_t,\hat{a}_t^k)} \, d\tilde{z}_t^k = \mathbb{E}_{q_\phi}\left[ \frac{p(\tilde{\varepsilon}_k|\tilde{z}_t^k)p(\tilde{z}_t^k|z_t)}{q_\phi(\tilde{z}_t^k|z_t,\hat{a}_t^k)} \right].
\end{equation}
Thus, we have:
\begin{equation}
    \log p(\varepsilon_k | z_t) = \log \mathbb{E}_{q_\phi}\left[ \frac{p(\tilde{\varepsilon}_k|\tilde{z}_t^k)p(\tilde{z}_t^k|z_t)}{q_\phi(\tilde{z}_t^k|z_t,\hat{a}_t^k)} \right].
\end{equation}
Since $\log(\cdot)$ is a concave function, by Jensen’s inequality:
\begin{equation}
    \log \mathbb{E}_{q_\phi}[X] \geq \mathbb{E}_{q_\phi}[\log X],
\end{equation}
for any non-negative random variable X.
Here, $X = \frac{p(\tilde{\varepsilon}_k|\tilde{z}_t^k)p(\tilde{z}_t^k|z_t)}{q_\phi(\tilde{z}_t^k|z_t,\hat{a}_t^k)}$ is clearly non-negative because all densities are non-negative.

Thus, applying Jensen’s inequality to our case gives:
\begin{equation}
\begin{aligned}
    \log \mathbb{E}_{q_\phi}\left[ \frac{p(\tilde{\varepsilon}_k|\tilde{z}_t^k)p(\tilde{z}_t^k|z_t)}{q_\phi(\tilde{z}_t^k|z_t,\hat{a}_t^k)} \right] 
    &\geq \mathbb{E}_{q_\phi}\left[ \log \left( \frac{p(\tilde{\varepsilon}_k|\tilde{z}_t^k)p(\tilde{z}_t^k|z_t)}{q_\phi(\tilde{z}_t^k|z_t,\hat{a}_t^k)} \right) \right] \\
    &= \mathbb{E}_{q_\phi}\left[ \log p(\tilde{\varepsilon}_k|\tilde{z}_t^k) + \log p(\tilde{z}_t^k|z_t) - \log q_\phi(\tilde{z}_t^k|z_t,\hat{a}_t^k) \right] \\
    &= \mathbb{E}_{q_\phi}\left[ \log p(\tilde{\varepsilon}_k|\tilde{z}_t^k) \right] + \mathbb{E}_{q_\phi}\left[ \log p(\tilde{z}_t^k|z_t) \right] - \mathbb{E}_{q_\phi}\left[ \log q_\phi(\tilde{z}_t^k|z_t,\hat{a}_t^k) \right], \\
    &= \mathbb{E}_{q_\phi}\left[ \log p(\tilde{\varepsilon}_k|\tilde{z}_t^k) \right] - \text{KL}\left( q_\phi(\tilde{z}_t^k | z_t, \hat{a}_t^k) \, \| \, p(\tilde{z}_t^k | z_t) \right).
\end{aligned}
\end{equation}
Thus, we have proven:
\begin{equation}
    \log p(\varepsilon_k | z_t) \geq \mathbb{E}_{q_\phi(\tilde{z}_t^k | z_t, \hat{a}_t^k)}\left[ \log p(\tilde{\varepsilon}_k|\tilde{z}_t^k) \right] - \text{KL}\left( q_\phi(\tilde{z}_t^k | z_t, \hat{a}_t^k) \, \| \, p(\tilde{z}_t^k | z_t) \right).
\end{equation}
$\square$

\section{Derivation and Intuition for the KL Divergence Term in Equation~\ref{eqn:kl}}\label{sec:kl_app}
To regularize the latent evolution $\tilde{z}_t^k$ and preserve semantic consistency with the observation-encoded latent $z_t$, we introduce a KL divergence between a variational posterior and a conditional prior. 
This section provides a step-by-step derivation and intuition for the expression in Equation~\ref{eqn:kl}.

\textbf{Formulation.} The variational posterior is defined as a Gaussian distribution:
\begin{equation}
    q_\phi(\tilde{z}_t^k|z_t,\hat{a}_t^k) = \mathcal{N}(\mu_\phi(z_t, \hat{a}_t^k), \operatorname{diag}(\sigma^2_\phi(z_t, \hat{a}_t^k))),
\end{equation}
where both the mean and log-variance are predicted by linear layers. 
The conditional prior is set to:
\begin{equation}
    p(\tilde{z}_t^k|z_t) = \mathcal{N}(z_t, I),
\end{equation}
which enforces that the evolved latent remains close to its observation-driven anchor $z_t$ unless strongly influenced by the action.

\textbf{KL Between Diagonal Gaussians.} 
The KL divergence between two diagonal Gaussians $\mathcal{N}(\mu_q, \Sigma_q)$ and $\mathcal{N}(\mu_p, \Sigma_p)$ is given by:
\begin{equation}
\text{KL}(q \| p) = \frac{1}{2} \sum_{i=1}^d \left[
\frac{\sigma_{q,i}^2}{\sigma_{p,i}^2}
+ \frac{(\mu_{q,i} - \mu_{p,i})^2}{\sigma_{p,i}^2}
- 1 - \log \frac{\sigma_{q,i}^2}{\sigma_{p,i}^2}
\right],
\end{equation}
where $d$ is the dimensionality of samples drawn from Guassians.
Substituting $\mu_q = \mu_\phi(z_t, \hat{a}_t^k)$, $\sigma_q=\sigma_{\phi}(z_t,\hat{a}_t^k)$, $\mu_p = z_t$, and $\Sigma_p = I$, we obtain:
\begin{equation}
\text{KL}( q_\phi(\tilde{z}_t^k|z_t, \hat{a}_t^k) \, \| \, p(\tilde{z}_t^k|z_t) ) =
\frac{1}{2} \sum_{i=1}^d \left( \sigma_{\phi,i}^2 + (\mu_{\phi,i} - z_{t,i})^2 - 1 - \log \sigma_{\phi,i}^2 \right),
\end{equation}
where $d$ is now the latent dimension, and all quantities are computed element-wise.

\textbf{Intuitions.} Each term in the KL expression serves a distinct regularization role:
\begin{itemize}[leftmargin=*]
    \item $(\mu_{\phi,i} - z_{t,i})^2$: penalizes deviation of the evolved latent mean from the observation-encoded static latent.
    \item $\sigma_{\phi,i}^2$: penalizes over-dispersion and encourages confident latent representations.
    \item $\log \sigma_{\phi,i}^2$: rewards expressive uncertainty when needed, balancing the previous term.
\end{itemize}
This KL divergence term acts as a geometric constraint to preserve alignment with the observation-anchored latent space, while still allowing dynamic evolution based on the action. 
Combined with our contrastive loss (Equation~\ref{eqn:contrastive}), it stabilizes training by maintaining local continuity and global discriminability.

\section{Proof of Lemma~\ref{lem:cluster}}\label{sec:cluster_proof}
Following the theoretical work on the InfoNCE loss~\citep{parulekar2023infonce,wang2024what}, we derive the lower bound for the positive key alignment:
\begin{lemma*}[Noise Similarity Lower Bound]
For unit-normalized vectors $\varepsilon^i_k$ and $\tilde{\varepsilon}^i_k$, and a temperature $\tau > 0$, if the InfoNCE loss satisfies $\mathcal{L}_{\text{cont}} \leq \alpha$ for some small constant $\alpha$, then for each $i \in \{1, \dots, B\}$, the similarity between corresponding pairs is bounded accordingly:
\begin{equation}
\underbrace{\mathrm{sim}\left(\varepsilon^i_k,\tilde{\varepsilon}^i_k\right)}_{\text{positive pair similarity}} \geq \tau (\ln(B-1) - \alpha) - 1.
\end{equation}
\end{lemma*}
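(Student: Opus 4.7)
The plan is to unpack the per-sample InfoNCE loss, rewrite it as a difference between the positive-pair term and a log-sum-exp over negatives, and then exploit the fact that cosine similarity of unit vectors lies in $[-1,1]$. This gives a uniform lower bound on the log-sum-exp that, once combined with the hypothesis on $\mathcal{L}_{\text{cont}}$, inverts into a lower bound on the positive similarity.

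Concretely, writing $s_{ij} = \operatorname{sim}(\varepsilon_k^i, \tilde{\varepsilon}_k^j)$, my first step is to split the per-sample loss as
\[
\ell_i \;=\; -\frac{s_{ii}}{\tau} \;+\; \log \sum_{j \neq i} \exp\!\left(\frac{s_{ij}}{\tau}\right).
\]
Next, since $s_{ij} \geq -1$ for every negative pair, each exponential is at least $\exp(-1/\tau)$, so the log-sum-exp is bounded below by $\ln(B-1) - 1/\tau$. Substituting this into the display and rearranging produces the per-sample inequality
\[
s_{ii} \;\geq\; \tau\bigl(\ln(B-1) - \ell_i\bigr) \;-\; 1.
\]
Finally, I would tie the per-sample loss to the global hypothesis $\mathcal{L}_{\text{cont}} \leq \alpha$. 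Averaging the above inequality over $i \in \{1,\dots,B\}$ yields a clean bound $\tfrac{1}{B}\sum_i s_{ii} \geq \tau(\ln(B-1) - \mathcal{L}_{\text{cont}}) - 1 \geq \tau(\ln(B-1) - \alpha) - 1$ on the mean positive similarity, while obtaining the uniform per-$i$ bound as literally stated in the lemma requires the implicit strengthening $\ell_i \leq \alpha$ for every $i$, which is the conventional reading at convergence.

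The main obstacle I expect is this last translation from an average-loss hypothesis to a per-sample similarity statement; the first two steps are an essentially mechanical application of the boundedness of cosine similarity. I would resolve the gap either by interpreting the conclusion as a bound on the mean positive similarity, which follows directly from the chain above, or by imposing the uniform per-sample bound $\ell_i \leq \alpha$ as the intended hypothesis. Either reading is adequate for the downstream use in Theorem~\ref{prop:mutual}, where the similarity lower bound is fed through the Lipschitz constant of $\epsilon_\theta$ to control latent drift.
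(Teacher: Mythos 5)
Your proof takes essentially the same route as the paper's: both lower-bound the sum over negatives by using $s_{ij}\geq -1$ to get the $(B-1)\exp(-1/\tau)$ floor, and then rearrange to isolate $s_{ii}$. (The paper phrases this in terms of $p_i = \exp(s_{ii}/\tau)/\sum_{j\neq i}\exp(s_{ij}/\tau)$, which is just $\exp(-\ell_i)$ in your notation.) The gap you flag at the end is real, and it is present in the paper's own argument as well: the paper goes from $\prod_i p_i \geq \exp(-B\alpha)$ to $p_i \geq \exp(-\alpha)$ by writing ``assume that $p_i$ is equal across all $i$ to maximize the product under the constraint by AM-GM,'' which is precisely the unjustified strengthening from an average constraint to a per-sample one that you identify. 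As you note, this does not follow in general — one $p_i$ can be made arbitrarily small while keeping the batch average of $-\log p_i$ below $\alpha$ — so the honest statement is either a bound on the \emph{mean} positive similarity $\tfrac{1}{B}\sum_i s_{ii}$, or a per-sample bound under the strengthened hypothesis $\ell_i \leq \alpha$ for each $i$. Your two proposed resolutions are both sound, and either one suffices for the downstream use in Theorem~\ref{prop:mutual}.
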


\begin{proof}
Recall that the InfoNCE loss is defined as:  
\begin{equation}
    \mathcal{L}_{\text{cont}} = -\frac{1}{B} \sum_{i=1}^B \log \frac{\exp\left( \mathrm{sim}\left(\varepsilon^i_k, \tilde{\varepsilon}^i_k\right)/\tau \right)}{\sum_{j \neq i} \exp\left( \mathrm{sim}\left(\varepsilon^i_k, \tilde{\varepsilon}^j_k\right)/\tau \right)}.
\end{equation}
To simplify the notations, let $s_{ii} = \mathrm{sim}(\varepsilon^i_k, \tilde{\varepsilon}^i_k)$ and $s_{ij} = \mathrm{sim}(\varepsilon^i_k, \tilde{\varepsilon}^j_k)$ for $j \neq i$. 
We define:  
\begin{equation}
    p_i = \frac{\exp(s_{ii}/\tau)}{\sum_{j \neq i} \exp(s_{ij}/\tau)}.
\end{equation}
The loss constraint $\mathcal{L}_{\text{cont}} \leq \alpha$ implies:  
\begin{equation}
   -\frac{1}{B} \sum_{i=1}^B \log p_i \leq \alpha \implies \sum_{i=1}^B \log p_i \geq -B \alpha. 
\end{equation}
Taking exponential at both sides, we have:
\begin{equation}
\prod_{i=1}^B p_i \geq \exp(-B\alpha). 
\end{equation}

To derive a lower bound on $s_{ii}$, we need the denominator $\sum_{j \neq i} \exp(s_{ij}/\tau)$ to be as small as possible. Since $s_{ij} \geq -1$ (unit vectors), the denominator is minimized when all $s_{ij} = -1$. Thus, we have:
\begin{equation}
    \sum_{j \neq i} \exp(s_{ij}/\tau) \geq (B-1)\exp(-1/\tau),
\end{equation}
and 
\begin{equation}\label{eqn:inequal}
    p_i \leq \frac{\exp(s_{ii}/\tau)}{(B-1)\exp(-1/\tau)}=\frac{1}{B-1} \exp\left(\frac{s_{ii} + 1}{\tau}\right),
\end{equation}

Assume that $p_i$ is equal across all $i$ to maximize the product under the constraint by Arithmetic Mean-Geometric Mean (AM-GM) inequality~\citep{tan2020extension}:
\begin{equation}
p_i = \left( \prod_{i=1}^B p_i \right)^{1/B} \geq \exp(-B\alpha)^{1/B} = \exp(-\alpha).
\end{equation}
Substituting into Inequality~\ref{eqn:inequal}, we have:
\begin{equation}
\exp(-\alpha) \leq p_i \leq \frac{1}{B-1} \exp\left(\frac{s_{ii} + 1}{\tau}\right).
\end{equation}
Focusing on the inequality:
\begin{equation}
\exp(-\alpha) \leq \frac{1}{B-1} \exp\left(\frac{s_{ii} + 1}{\tau}\right).
\end{equation}
Multiply both sides by \( B-1 \):
\begin{equation}
(B-1) \exp(-\alpha) \leq \exp\left(\frac{s_{ii} + 1}{\tau}\right).
\end{equation}
Take the natural logarithm of both sides:
\begin{equation}
\ln\left((B-1) \exp(-\alpha)\right) \leq \frac{s_{ii} + 1}{\tau}.
\end{equation}
Simplify the left-hand side:
\begin{equation}
\ln(B-1) + \ln(\exp(-\alpha)) = \ln(B-1) - \alpha.
\end{equation}
Thus:
\begin{equation}
\ln(B-1) - \alpha \leq \frac{s_{ii} + 1}{\tau}.
\end{equation}
Multiply through by \( \tau \):
\begin{equation}
\tau (\ln(B-1) - \alpha) \leq s_{ii} + 1.
\end{equation}
Finally, we have the lower bound for \( s_{ii} \):
\begin{equation}
s_{ii} \geq \tau (\ln(B-1) - \alpha) - 1.
\end{equation}
Thus, we have proved:
\begin{equation}
    \mathrm{sim}\left(\varepsilon^i_k, \tilde{\varepsilon}^i_k\right) \geq \tau (\ln(B-1) - \alpha) - 1.
\end{equation}

\end{proof}

\section{Proof of Theorem~\ref{prop:mutual}}\label{sec:mutual_proof}
\begin{theorem*}[Continuity Upper Bound]
Suppose $\epsilon_\theta(\cdot)$ is L-Lipschitz with respect to $z$, and that $\tilde{z}_t^{k+1}$ evolves via the VJP of $\epsilon_\theta$. 
Then under the contrastive constraint $\mathcal{L}_{\text{cont}} \leq \alpha$, we have:
\begin{equation}
\|\tilde{z}_t^{k+1} - \tilde{z}_t^k\|^2_2
\leq L^2\|\epsilon_\theta(\hat{a}_t^k, z_t, k) - \epsilon_\theta(\hat{a}_t^k, \tilde{z}_t^{k}, k)\|^2_2
\leq 2L^2(2 - \tau \ln(B-1) + \tau \alpha).
\end{equation}
\end{theorem*}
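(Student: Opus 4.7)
The plan is to chain two estimates: (i) a Lipschitz-based bound that controls the one-step latent increment $\|\tilde z_t^{k+1}-\tilde z_t^k\|_2$ by the noise-prediction gap $\|\epsilon_\theta(\hat a_t^k,z_t,k)-\epsilon_\theta(\hat a_t^k,\tilde z_t^k,k)\|_2$, and (ii) a similarity-to-distance conversion that turns the lower bound on cosine similarity from Lemma~\ref{lem:cluster} into an upper bound on the squared $\ell_2$ distance between those noise predictions. Composing (i) and (ii) gives the full inequality.

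For step (i), I would expand the VJP-driven update in Equation~\ref{eqn:latent_update} and note that the only quantities differing between consecutive discrete steps are the noise predictions that drive the drift. Since $\epsilon_\theta(\cdot)$ is assumed $L$-Lipschitz in its latent argument, the Jacobian $\partial\epsilon_\theta/\partial z$ has operator norm at most $L$, so the VJP-induced displacement is controlled by $L$ times the residual between the noise prediction evaluated at the static anchor $z_t$ and at the evolved latent $\tilde z_t^k$. This yields
\begin{equation*}
\|\tilde z_t^{k+1}-\tilde z_t^k\|_2^2 \leq L^2\,\|\epsilon_\theta(\hat a_t^k,z_t,k)-\epsilon_\theta(\hat a_t^k,\tilde z_t^k,k)\|_2^2,
\end{equation*}
which is exactly the first inequality of the theorem.

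For step (ii), I would exploit the unit-normalization hypothesis inherited from Lemma~\ref{lem:cluster}: for any unit vectors $u,v$ one has the identity $\|u-v\|_2^2=2-2\,\mathrm{sim}(u,v)$. Plugging in $u=\varepsilon_k^i$, $v=\tilde\varepsilon_k^i$ and substituting the positive-pair lower bound $\mathrm{sim}(\varepsilon_k^i,\tilde\varepsilon_k^i)\geq \tau(\ln(B-1)-\alpha)-1$ gives
\begin{equation*}
\|\varepsilon_k^i-\tilde\varepsilon_k^i\|_2^2 \leq 2\bigl(2-\tau\ln(B-1)+\tau\alpha\bigr).
\end{equation*}
Multiplying by $L^2$ and chaining with step (i) produces the claimed two-sided bound.

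The main obstacle I anticipate is making step (i) fully rigorous: the discrete increment $\tilde z_t^{k+1}-\tilde z_t^k$ is not literally the Jacobian applied to $(\varepsilon_k-\tilde\varepsilon_k)$, because the VJP is evaluated at a single latent rather than at two. I would therefore argue either via a first-order Taylor expansion of the VJP around $z_t$, so that the leading term factors as $(\partial\epsilon_\theta/\partial z)^\top(\varepsilon_k-\tilde\varepsilon_k)$ and is bounded by $L\|\varepsilon_k-\tilde\varepsilon_k\|_2$, or by reinterpreting the evolution as a pseudo-gradient step whose sensitivity to the conditioning latent is controlled by the Lipschitz constant. Step (ii), in contrast, is a clean algebraic identity that plugs directly into Lemma~\ref{lem:cluster}, so the remaining work is bookkeeping.
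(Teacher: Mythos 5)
Your proposal mirrors the paper's proof exactly: Step~(i) is the Lipschitz transfer from the noise-prediction gap to the latent increment (squared), and Step~(ii) is the unit-vector identity $\|u-v\|_2^2 = 2\bigl(1-\mathrm{sim}(u,v)\bigr)$ combined with the positive-pair lower bound from Lemma~\ref{lem:cluster}. The gap you flag in step~(i) is real but is equally present in the paper's own argument, which simply declares that the SDE step size is ``absorbed into the $L$-Lipschitz constant'' and asserts $\|\tilde z_t^{k+1}-\tilde z_t^k\|_2\leq L\|\epsilon_\theta(\hat a_t^k,z_t,k)-\epsilon_\theta(\hat a_t^k,\tilde z_t^k,k)\|_2$ without the first-order Taylor justification you sketch, so your treatment is at least as careful as the original.
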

\begin{proof}
We split the proof into two steps:

\textbf{Step 1: Lipschitz continuity of \(\epsilon_\theta\) relates latent updates and noise difference.}
Taking the vector-Jacobian product (VJP) approximation for the drift of the SDE evolution, the update of $\tilde{z}_t^k$ at each step satisfies:
\begin{equation}
\|\tilde{z}_t^{k+1} - \tilde{z}_t^k\|_2 \;\propto\; \text{VJP}(\hat{a}_t^k, z_t),
\end{equation}
where, under the Lipschitz continuity assumption, the VJP depends linearly on the local noise gradient structure.

Assuming proper normalization of the step size in the SDE discretization (absorbed into the $L$-Lipschitz constant), we have:
\begin{equation}
  \|\tilde{z}_t^{k+1} - \tilde{z}_t^k\|_2 \;\leq\; L \|\epsilon_\theta(\hat{a}_t^k, z_t, k) - \epsilon_\theta(\hat{a}_t^k, \tilde{z}_t^{k}, k)\|_2.  
\end{equation}
Squaring both sides yields:
\begin{equation}
    \|\tilde{z}_t^{k+1} - \tilde{z}_t^k\|^2_2 \;\leq\; L^2\,\|\epsilon_\theta(\hat{a}_t^k, z_t, k) - \epsilon_\theta(\hat{a}_t^k, \tilde{z}_t^{k}, k)\|^2_2.
\end{equation}

\textbf{Step 2: Contrastive loss bounds the noise difference.}
From Lemma~\ref{lem:cluster}, when $\mathcal{L}_{\mathrm{cont}} \leq \alpha$, the cosine similarity between $\varepsilon^i_k$ and $\tilde{\varepsilon}^i_k$ is bounded below:
\begin{equation}
\text{sim}\left(\varepsilon^i_k, \tilde{\varepsilon}^i_k\right) \geq \tau (\ln(B-1) - \alpha) - 1.   
\end{equation}
Recall that cosine similarity between two unit-norm vectors \(u, v\) satisfies:
\begin{equation}
\|u-v\|^2_2 = 2(1 - \text{sim}(u,v)).  
\end{equation}
Thus, applying this relation to $\varepsilon_k$ and $\tilde{\varepsilon}_k$, we obtain:
\begin{equation}
\|\epsilon_\theta(\hat{a}_t^k, z_t, k) - \epsilon_\theta(\hat{a}_t^k, \tilde{z}_t^k, k)\|^2_2 = 2\left(1 - \text{sim}\left(\epsilon_\theta(\hat{a}_t^k, z_t, k), \epsilon_\theta(\hat{a}_t^k, \tilde{z}_t^{k}, k)\right)\right).
\end{equation}
By plugging in the lower bound on similarity from Lemma~\ref{lem:cluster}, we get:
\begin{equation}
\|\epsilon_\theta(\hat{a}_t^k, z_t, k) - \epsilon_\theta(\hat{a}_t^k, \tilde{z}_t^{k}, k)\|^2_2 \leq 2(2 - \tau\ln(B-1) + \tau\alpha).
\end{equation}
Thus, combining Step 1 and Step 2, we can conclude that:
\begin{equation}
\|\tilde{z}_t^{k+1} - \tilde{z}_t^k\|^2_2
\;\leq\; L^2\,\|\epsilon_\theta(\hat{a}_t^k, z_t, k) - \epsilon_\theta(\hat{a}_t^k, \tilde{z}_t^{k}, k)\|^2_2
\;\leq\; 2L^2(2 - \tau\ln(B-1) + \tau\alpha).
\end{equation}
This completes the proof.
\end{proof}

\section{Robotic Manipulation Benchmarks}\label{sec:dataset}
This ablation section details the tasks and corresponding datasets for four robotic manipulation benchmarks for simulation: Robomimic, Franka Kitchen, Push-T, and Dynamic Push-T. 
Each benchmark contributes unique tasks and datasets to evaluate various aspects of robotic manipulation, from precision and coordination to adaptability in dynamic environments. 
Below, we describe each task, its objectives, and the associated datasets, highlighting their roles in our empirical evaluations.
\textbf{Robomimic Benchmark.}
Robomimic is a comprehensive benchmark for robotic manipulation with five tasks: {\bf Can}, {\bf Square}, {\bf Transport}, {\bf Tool Hang}, and {\bf Lift}. 
These tasks are designed to evaluate a range of manipulation skills that include pick-and-place, precision assembly, multi-arm coordination, and basic object manipulation. 
Each task includes expert demonstrations collected via proficient human (PH) teleoperation, with additional mixed human (MH) datasets, resulting in nine datasets total. 
The datasets are structured in HDF5 format, containing observations, actions, rewards, and other metadata, and are available online (\url{https://github.com/ARISE-Initiative/robomimic}).

\begin{table}[h]
\centering
\caption{Summary of Robomimic Tasks and Datasets}
\label{tab:robomimic_summary}
\begin{tabular}{lccp{4cm}}
\toprule
\textbf{Task} & \textbf{PH Demos} & \textbf{MH Demos} & \textbf{Key Skills Tested} \\
\midrule
Can & 200 & 300 & Pick-and-place \\
Square & 200 & 300 & Precision manipulation \\
Transport & 200 (2 operators) & 300 & Multi-arm coordination \\
Tool Hang & 200 & 300 & Precision grasping, insertion \\
Lift & 200 & 300 & Basic object manipulation \\
\bottomrule
\end{tabular}
\end{table}

\textbf{Can.} The Can task involves picking up a can and placing it in the correct bin, testing pick-and-place skills. The dataset includes two variants:
\begin{itemize}[leftmargin=*]
    \item \textbf{Proficient Human (PH)}: 200 demonstrations collected by a single proficient operator using the RoboTurk platform (\url{https://roboturk.stanford.edu/}).
    \item \textbf{Mixed Human (MH)}: 300 demonstrations from six operators of varying proficiency, introducing variability in demonstration quality.
    \end{itemize}
The PH dataset provides high-quality demonstrations, while MH introduces real-world variability.

\textbf{Square.} The Square task, also known as Square Nut Assembly, requires fitting a square nut onto a square peg (and potentially a round nut onto a round peg). 
The scene includes two colored pegs (square and round) and two nuts on a tabletop, with randomized nut locations at episode start (\url{https://robosuite.ai/docs/modules/environments.html}). 
This task evaluates precision manipulation.
The dataset includes two variants:
\begin{itemize}[leftmargin=*]
    \item \textbf{Proficient Human (PH)}: 200 demonstrations collected by a proficient operator via RoboTurk.
    \item \textbf{Mixed Human (MH)}: 300 demonstrations from operators of varying proficiency.
\end{itemize}
These datasets test algorithms on precise manipulation with both expert and varied human inputs.

\textbf{Transport.} The Transport task involves transporting an object using two robotic arms, which requires multi-arm coordination. 
It is noted for its complexity, with observation spaces including shoulder and wrist views per arm, indicating a dual-arm setup (\url{https://robomimic.github.io/study/}).
The dataset includes two variants:
\begin{itemize}[leftmargin=*]
    \item \textbf{Proficient Human (PH)}: 200 demonstrations collected by two proficient operators working together (\url{https://www.tensorflow.org/datasets/catalog/robomimic_ph}).
    \item \textbf{Mixed Human (MH)}: 300 demonstrations from varied operators.
\end{itemize}
The datasets evaluate coordination and robustness in multi-arm cooperation tasks.

\textbf{Tool Hang.} The Tool Hang task requires hanging a tool (e.g., a hook), which involves precise grasping and insertion.
The dataset includes two variants:
\begin{itemize}[leftmargin=*]
    \item \textbf{Proficient Human (PH)}: 200 demonstrations collected by a proficient operator via RoboTurk.
    \item \textbf{Mixed Human (MH)}: 300 demonstrations from operators of varying proficiency.
\end{itemize}
These datasets evaluate on high-precision tasks.

\textbf{Lift.} The Lift task involves lifting an object, a simple manipulation task that benefits less from large datasets compared to complex tasks.
The dataset includes two variants:
\begin{itemize}[leftmargin=*]
    \item \textbf{Proficient Human (PH)}: 200 demonstrations collected by a proficient operator via RoboTurk.
    \item \textbf{Mixed Human (MH)}: 300 demonstrations from operators of varying proficiency.
\end{itemize}
The datasets provide a baseline for evaluating basic manipulation skills.

Table~\ref{tab:robomimic_summary} provides a summary of the Robomimic tasks and their corresponding datasets.

\textbf{Franka Kitchen Benchmark.}
Franka Kitchen is a simulation benchmark where a 9-DoF Franka arm operates in a kitchen environment, performing four household tasks per trajectory. 
The tasks involve manipulating objects to achieve a desired goal configuration, which include:
\begin{itemize}[leftmargin=*]
    \item \textbf{Open the Microwave}: The robot opens the microwave door.
    \item \textbf{Move the Kettle}: The robot repositions the kettle to a target location.
    \item \textbf{Flip the Light Switch}: The robot toggles a light switch to turn on a light.
    \item \textbf{Slide Open the Cabinet Door}: The robot slides open a cabinet door.
\end{itemize}
The benchmark is hosted on platforms like Gymnasium-Robotics (\url{https://robotics.farama.org/envs/franka_kitchen/}) and uses datasets from D4RL (\url{https://github.com/Farama-Foundation/D4RL}).

\begin{table}[h]
\centering
\caption{Summary of Franka Kitchen Tasks and Datasets}
\label{tab:kitchen_dataset}
\begin{tabular}{lccp{3.5cm}}
\toprule
\textbf{Task} & \textbf{Description} & \textbf{Dataset} & \textbf{Key Skills Tested} \\
\midrule
Open Microwave & Open the microwave door & 566 trajectories & Object interaction, door manipulation \\
Move Kettle & Reposition the kettle & 566 trajectories & Object repositioning \\
Flip Light Switch & Toggle the light switch & 566 trajectories & Switch manipulation \\
Slide Cabinet Door & Slide open the cabinet door & 566 trajectories & Sliding mechanism interaction \\
\bottomrule
\end{tabular}
\end{table}

\textbf{Datasets.} The Franka Kitchen dataset consists of 566 trajectories, each completing all four tasks, collected via human teleoperation across seven interactive objects (a microwave, a kettle, an overhead light switch, a sliding cabinet, a hinged cabinet, a top burner, and a bottom burner) (\url{https://minari.farama.org/datasets/D4RL/kitchen/index.html}). The dataset evaluates the ability to sequence and execute multiple subtasks in a realistic kitchen environment.
Table~\ref{tab:kitchen_dataset} provides a summary of the Franka Kitchen tasks and their corresponding datasets.

\textbf{Push-T.}
Push-T is a manipulation task adapted from Implicit Broadcast Communication (IBC), where a circular end-effector pushes a T-shaped block to a target location.
The block and end-effector start at random positions, which can be defined by different evaluation seeds. 
Observations are either RGB images ({\bf img}) or nine 2D keypoints ({\bf kp}) outlining the T block’s shape and the target location, as well as the end-effector’s position (\url{https://paperswithcode.com/task/robot-manipulation}).

{\bf Dataset}: The Push-T dataset includes 62,500 push interactions across 200 evaluation seeds, designed to evaluate the pushing dynamics and perception in a controlled setting.

\textbf{Dynamic Push-T.} 
We extend the Push-T task by introducing a moving orange ball that travels at 100 units per second within a 500-unit square. 
The ball bounces off both the walls and the end-effector, occasionally disrupting the T-shaped block. 
The agent must simultaneously block these disturbances while pushing the block to the target, requiring it to maintain stable control under dynamic interference.
This setting explicitly evaluates latent continuity and adaptability: abrupt latent shifts can destabilize interactions, whereas smooth latent evolution enables the agent to adapt effectively to changing dynamics.

{\bf Dataset}: Similar to the Push-T dataset, we include 62,500 push interactions across 200 evaluation seeds, specifically designed to evaluate robustness and adaptability in dynamic and unpredictable environments.
Table~\ref{tab:pusht_dataset} provides a summary of the Push-T and Dynamic Push-T tasks and their corresponding datasets.

\begin{table}[h]
\centering
\caption{Summary of Push-T and Dynamic Push-T Tasks and Datasets}
\label{tab:pusht_dataset}
\begin{tabular}{lp{4cm}p{2cm}p{4cm}}
\toprule
\textbf{Task} & \textbf{Description} & \textbf{Dataset} & \textbf{Key Skills Tested} \\
\midrule
Push-T & Push T-shaped block to target & 62,500 pushes & Pushing dynamics and perception \\
Dynamic Push-T & Push T-shaped block with moving ball disturbance & 62,500 pushes & Adaptability and disturbance handling \\
\bottomrule
\end{tabular}
\end{table}

\section{Implementation Details}\label{sec:implementation}
Our DP-AG builds on the Diffusion Policy (DP)\citep{chi2023diffusion}, with two key modifications: (1) we add two linear heads for predicting the base drift $\mu_\phi(z_t)$ and log-variance $\log \sigma_\phi^2(z_t)$ from the static latent features, and (2) we incorporate VJP-guided SDE for latent evolution without introducing extra learnable parameters.

For observation encoding, we use a ResNet-18~\citep{he2016deep} backbone for image-based tasks. 
We adopt the same conditional U-Net architecture from DP~\citep{chi2023diffusion} to predict diffusion noise.
During training, we apply random cropping with task-specific sizes as in DP, while a center crop is used at inference. 
To maintain dynamic consistency, no color jitter or random flipping is applied.
An MLP is used to encode the agent’s proprioceptive inputs.
For key-point-based tasks, we follow the original DP setup and use fully connected networks.

We also apply action normalization scales each action dimension independently to $[-1, 1]$ to ensure compatibility with the DDPM denoising process, where predictions are clipped within this range at each step. 
For positional control tasks, actions use 6D rotation representations. 
Velocity control tasks use 3D axis-angle representations, which follow standard practice.

We train the model using the iDDPM algorithm~\citep{nichol2021improved} with 100 diffusion steps. All models are trained for 300 epochs on vision-based tasks and 200 epochs for key-point-based tasks.
For learning rate scheduling, we use a cosine annealing schedule with a linear warmup of 500 steps.
Batch sizes are set to 64 for image-based tasks and 256 for key-point-based tasks. 
Optimization uses AdamW with a learning rate of $1\times10^{-4}$ in all experiments.
All hyperparameters not directly related to DP-AG extensions (e.g., diffusion step count, augmentation, action normalization) are kept identical to DP for a controlled and fair comparison.
For inference, we maintain the same number of diffusion denoising steps as training to avoid introducing a distributional shift. 

\section{Ablation Studies}\label{sec:ab}
\subsection{Latency and Computational Cost Analysis}\label{sec:ablation_computation}
We evaluate the computational overhead introduced by our DP-AG compared to the original DP by measuring both training time per epoch, the inference latency, and the total training time to convergence. 
The computation analysis is conducted on the Push-T benchmark.
Experiments are conducted on an Nvidia RTX 4090 GPU with 24GB VRAM. 
Table~\ref{tab:latency} summarizes the detailed computational cost comparison between the DP baseline and our DP-AG.

\textbf{Training Time per Epoch.}
On the Push-T benchmark, the average training time per epoch increases slightly from 114.2 seconds for DP to 119.5 seconds for DP-AG, corresponding to an overhead of 4.6\%.
This increase is expected because VJP computation requires additional backward-mode automatic differentiation through the noise predictor.
However, the extra cost remains moderate, largely due to the relatively small dimensionality of the latent observation space.

\textbf{Inference Latency.}
At inference time, the VJP computations are omitted entirely.
Our DP-AG simply operates with the latent features extracted from the observation encoder at inference.
As a result, inference latency remains virtually unchanged: 145.3 milliseconds per action sequence generation for DP versus 146.5 milliseconds for our DP-AG, a marginal 0.8\% difference.
This negligible overhead ensures that DP-AG maintains real-time responsiveness for high-frequency robotic control while improving the smoothness and consistency of action generation.

\textbf{Training Efficiency and Total Time to Convergence.}
Although our DP-AG introduces a minor increase in per-epoch training time, it significantly accelerates convergence.
On the Push-T benchmark, DP requires approximately 200 epochs to converge, whereas DP-AG achieves comparable performance within only around 100 epochs.
This effectively reduces the number of required training epochs and the total training time by nearly 50\%.
To be specific, DP completes training in about 6.2 hours, while DP-AG completes training in approximately 3.5 hours, which results in a net saving of 2.7 hours.
Thus, despite the slight per-epoch overhead, our DP-AG achieves faster overall training and improved sample efficiency.

\textbf{Summary.}
Therefore, while DP-AG introduces additional VJP computations, modern autodiff frameworks ({\em e.g.}, PyTorch) execute them efficiently, leading to negligible runtime overhead. As shown in Table~\ref{tab:latency}, DP-AG incurs only a minor 4.6\% increase in per-epoch training time but delivers nearly 50\% higher training efficiency. 
Inference latency remains virtually unchanged: DP-AG sustains real-time control on the UR5 robot while producing smoother trajectories, lower jerk, and faster task completion. 
These results demonstrate that the added computation does not hinder deployment or responsiveness, and that our perception–action interplay both improves DP and substantially accelerates training, making DP-AG well suited for real-world applications where rapid retraining and real-time control are important.

\begin{table}[h]
\centering
\small
\setlength{\tabcolsep}{3pt}
\caption{Computational cost comparison between DP and DP-AG on Nvidia RTX 4090 (Push-T).}
\begin{tabular}{lcccc}
\toprule
Model & One Epoch Time (s) & Epochs to Converge & Total Training Time (h) & Inference Latency (ms) \\
\midrule
DP & 114.2 & $\sim$200 & $\sim$6.2 & 145.3 \\
\rowcolor{gray!20}
DP-AG (Ours) & 119.5 & $\sim$100 & $\sim$3.5 & 146.5 \\
\hline
\end{tabular}
\label{tab:latency}
\end{table}

\subsection{Effect of Cycle-Consistent Contrastive Loss}
\begin{figure}
    \centering
    \includegraphics[width=0.8\textwidth]{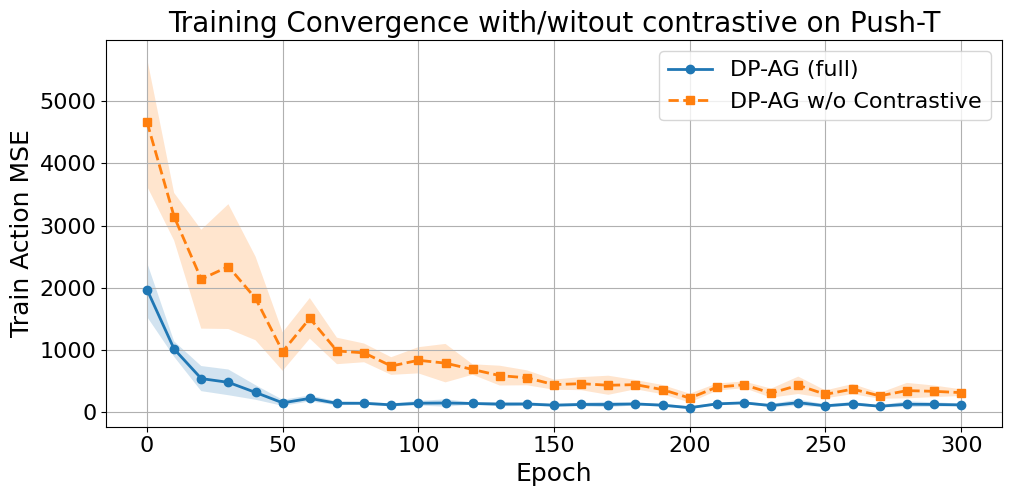} 
    \caption{Comparison of training convergence on the Push-T benchmark for DP-AG with and without cycle-consistent contrastive loss.}
    \label{fig:withwithout_contrastive}
\end{figure}
To evaluate the importance of the cycle-consistent contrastive loss in our DP-AG, we conduct an ablation study by removing this component while keeping the rest of the architecture unchanged.
This allows us to isolate the role of cycle consistency in enforcing mutual smoothness between latent evolution and action refinement during training.

\textbf{Setup.}
We compare two variants on the Push-T benchmark:
\begin{itemize}[leftmargin=*]
    \item DP-AG (full model): Includes the cycle-consistent contrastive loss between static and VJP-guided noise predictions.
    \item DP-AG w/o Contrastive: Removes the contrastive term from the training objective, relying only on the diffusion loss and KL regularization.
\end{itemize}
Both variants are trained under the same settings; and the action MSE is evaluated across training epochs.
We repeat the experiments with 5 different initialization seeds for training.

\textbf{Results.}
As shown in Figure~\ref{fig:withwithout_contrastive} and Table~\ref{tab:contrastive}, removing the cycle-consistent loss leads to slower convergence, higher final training action MSE, and reduced target coverage scores.
To be specific, the model without contrastive loss requires approximately 3 times more epochs to converge compared to the full DP-AG; the final success rate drops from 93\% to 85\% on the Push-T benchmark; and the training action MSE remains consistently higher throughout training, which indicates less accurate action generation.

\textbf{Analysis.}
Without the contrastive alignment between evolving and static latents, VJP-guided perturbations can drift away from the optimal action refinement trajectory, which degrades both latent continuity and action smoothness.
The cycle-consistent loss is important in closing the perception-action loop, which can ensure that latent evolution remains tightly coupled with action denoising across diffusion steps.

\begin{table}[h]
\centering
\small
\setlength{\tabcolsep}{8pt}
\caption{Effect of cycle-consistent contrastive loss on Push-T benchmark.}
\begin{tabular}{lccc}
\toprule
Model & Epochs to Converge & Converged Train Action MSE & Score \\
\midrule
DP-AG (full) & 100 & 65.8 & 0.93 \\
DP-AG w/o Contrastive & 300 & 183.5 & 0.85 \\
\hline
\end{tabular}
\label{tab:contrastive}
\end{table}

\subsection{Likelihood Supervision vs. Contrastive Loss}\label{sec:ablation_likelihood}
\begin{figure}
    \centering
    \includegraphics[width=0.9\textwidth]{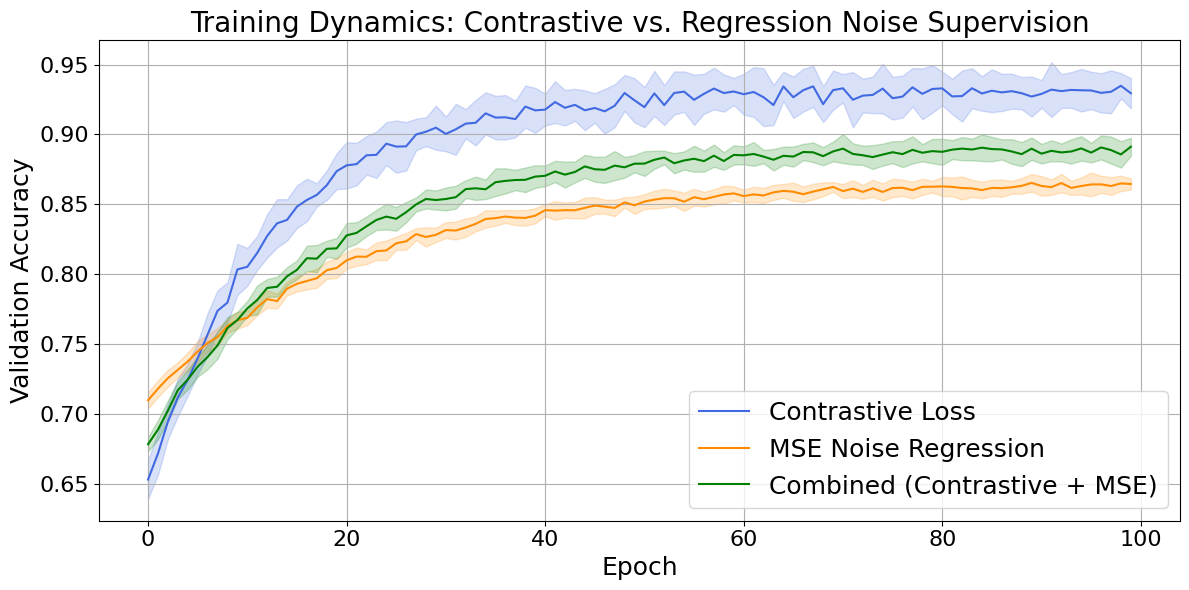} 
    \caption{Validation accuracy during training with contrastive loss vs. MSE noise regression on the Push-T benchmark.
    Contrastive loss achieves faster convergence and better performance but with slightly more variance during training.}
    \label{fig:contrastive_mse}
\vspace{-2mm}
\end{figure}

In Section~\ref{sec:elbo}, we derive a noise regression objective from the variational lower bound, which serves a similar role to our cycle-consistent contrastive loss. 
To evaluate the necessity of the contrastive loss, we replace it with the likelihood-based noise regression objective defined in Equation~\ref{eqn:lh}:
\begin{equation}
    \mathcal{L}_{\text{LH}} = \mathbb{E}_{(o_t, a_t) \sim \mathcal{D},\; k \sim \mathcal{U}(1, K)} \left[ \| \epsilon_\theta(\hat{a}_t^{k}, \tilde{z}_t^{k}, k) - \epsilon \|^2_2 \right].
\end{equation}

\textbf{Experimental Setup.}
We compare three variants:
\begin{itemize}[leftmargin=*]
\item \textbf{Cycle-Consisteny Contrastive}: Uses the proposed cycle-consistent contrastive loss $\mathcal{L}_{\text{cont}}$ only.
\item \textbf{MSE-Only}: Replaces $\mathcal{L}_{\text{cont}}$ with the noise regression loss $\mathcal{L}_{\text{LH}}$.
\item \textbf{Combined}: Combines both objectives: $\mathcal{L} = \mathcal{L}_{\text{cont}} + \mathcal{L}_{\text{LH}}$.
\end{itemize}
All models are trained with 5 random seeds on the Push-T benchmark. We report the mean and standard deviation of validation accuracy over 100 epochs.

\textbf{Results.}
As shown in Figure~\ref{fig:contrastive_mse}, the contrastive-only variant not only converges faster but also achieves the best validation accuracy. 
Although MSE yields smoother convergence, it lacks the structural benefits of noise alignment that contrastive learning offers. 
The combined objective does not improve performance over the contrastive-only variant and sometimes results in unstable training, likely due to conflicting optimization signals between absolute and relative supervision.
These results validate that the contrastive alignment provides stronger inductive bias for learning dynamic consistency between static and VJP-guided latents, while the likelihood term, though informative, introduces redundancy. We thus omit the likelihood objective from our final model.

\subsection{Effect of VJP Strength}~\label{sec:ablation_vjp}
\vspace{-4mm}

In this section, we study how the strength of the VJP-guided perturbations affects the performance of our DP-AG.
Recall that the VJP acts as a stochastic ``force'' that shapes the evolution of latent observation features based on the diffusion process that refines the action generation.
While moderate VJP guidance helps structure latent trajectories coherently, overly weak or strong guidance may destabilize training or restrict flexibility.

\textbf{Setup.}
We introduce a scaling factor $\gamma$ applied to the VJP term during latent updates in Equation~\ref{eqn:latent_update}:
\begin{equation}
    dz_t = \gamma \cdot \text{VJP}(\hat{a}_t^k, z_t)\,dt + \sigma_\phi(z_t)\,dW_t,
\end{equation}
We evaluate the following settings:
$\gamma \in \{0.0,\ 0.5,\ 1.0,\ 2.0,\ 5.0\}$, where $\gamma=1.0$ is our default setting.
We repeat the experiments with 5 different initialization seeds for training.

\begin{figure}
    \centering
    \includegraphics[width=0.75\textwidth]{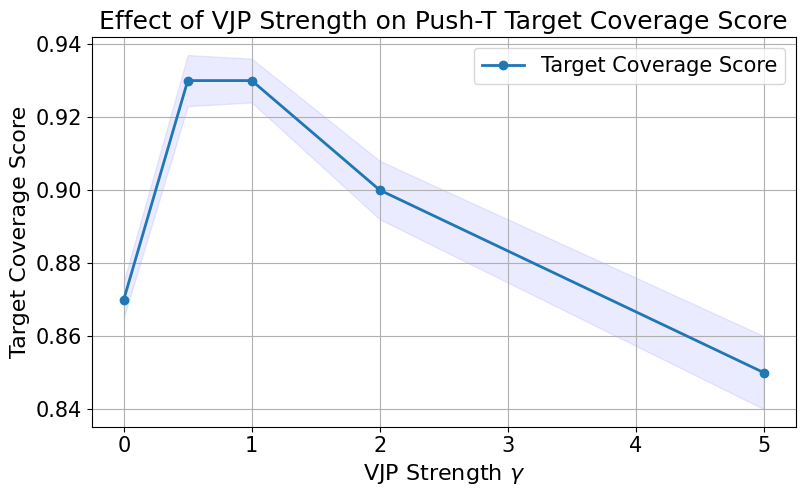} 
    \caption{Effect of the VJP strength $\gamma$ on the Push-T benchmark.}
    \label{fig:vjp_gamma}
\end{figure}

\textbf{Results.}
The results are presented in Figure~\ref{fig:vjp_gamma} and Table~\ref{tab:vjp_ablation}.
We observe that:
\begin{itemize}[leftmargin=*]
    \item Without VJP guidance, latent features remain static during diffusion, effectively reducing our DP-AG to a standard DP.
    \item Moderate VJP strength ($\gamma=0.5, 1.0$) achieves the best results, which tradeoff a balance between encouraging latent evolution and maintaining stability.
    \item High VJP strength ($\gamma=2.0, 5.0$) leads to unstable latent evolution, where the updated trajectories deviate excessively from the static latent features, ultimately degrading policy performance.
\end{itemize}

\begin{table}[h]
\centering
\small
\setlength{\tabcolsep}{8pt}
\caption{Effect of VJP strength $\gamma$ on Push-T benchmark.}
\begin{tabular}{lccc}
\toprule
$\gamma$ & Final Train Action MSE & Score & Latent Behavior \\
\midrule
0.0 & 75.3 & 0.87 & static \\
0.5 & 58.3 & 0.93 & smooth \\
1.0 & 65.8 & 0.93 & smooth \\
2.0 & 71.2 & 0.90 & over-reactive \\
5.0 & 241.5 & 0.85 & unstable \\
\hline
\end{tabular}
\label{tab:vjp_ablation}
\end{table}

\subsection{Effect of KL Loss Hyperparameter}\label{sec:ablation_kl}
\begin{figure}
    \centering
    \includegraphics[width=0.75\textwidth]{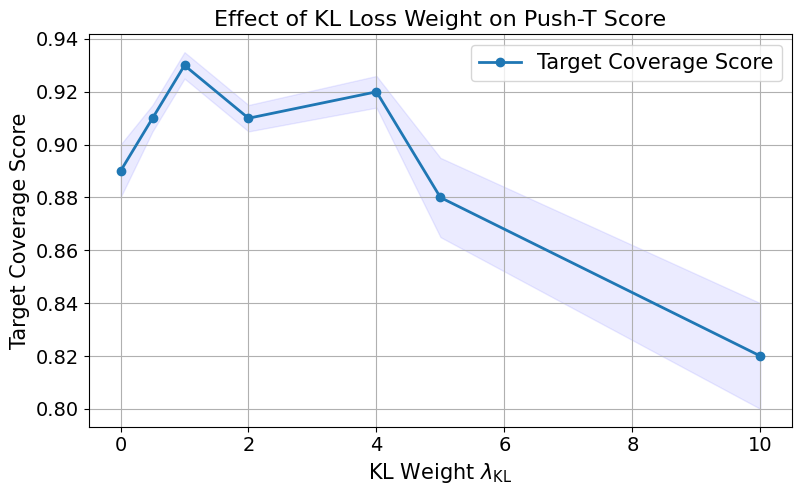} 
    \caption{
    Effect of KL loss weighting $\lambda_{\text{KL}}$ on the target coverage score for the Push-T benchmark.}
    \label{fig:kl_weight}
\end{figure}
In this section, we investigate the effect of varying the KL divergence coefficient $\lambda_{\text{KL}}$ on the performance of our DP-AG using the Push-T benchmark.
While $\beta$-VAE~\citep{higgins2017betavae} tunes KL regularization to promote disentanglement, in our case, the KL term stabilizes the action-guided latent evolution without affecting its ability to adaptively track action refinements.

\textbf{Setup.}
We train DP-AG on the Push-T benchmark with the following $\lambda_{\text{KL}}$ values:
\{0,\ 0.5,\ 1.0,\ 2.0,\ 4.0,\ 5.0,\ 10.0\}.
All other training configurations remain unchanged across experiments.
We repeat the experiments with 5 different initialization seeds for training.

\textbf{Results.}
The results are presented in Figure~\ref{fig:kl_weight} and summarized in Table~\ref{tab:kl_ablation}.
We observe the following trends:
\begin{itemize}[leftmargin=*]
    \item Without KL regularization ($\lambda_{\text{KL}} = 0$), latent evolution becomes less stable, causing updates to drift excessively from the static latent and resulting in lower target coverage scores.
    \item Small KL values ($\lambda_{\text{KL}}=0.5, 1.0$) achieve the best trade-off between flexibility and stability, which leads to the best performance.
    \item Moderate to large KL values ($\lambda_{\text{KL}}=2.0, 4.0, 5.0$) begin to over-regularize the latent space, which limits its ability to adapt to action refinements.
    \item Strong KL regularization ($\lambda_{\text{KL}}=10.0$) severely restricts latent evolution, which causes underfitting and significantly lower target coverage score.
\end{itemize}

\textbf{Analysis.}
These results suggest that the KL regularization in our DP-AG should not be viewed through the lens of promoting disentanglement, as in $\beta$-VAE.
Instead, it acts to anchor the VJP-guided latent dynamics toward stable yet adaptive evolution.

\begin{table}
\centering
\small
\setlength{\tabcolsep}{8pt}
\caption{Effect of KL loss hyperparameter $\lambda_{\text{KL}}$ on Push-T benchmark.}
\begin{tabular}{lccc}
\toprule
$\lambda_{\text{KL}}$ & Converged Train Action MSE & Score & Latent Stability \\
\midrule
0   & 123.1 & 0.89 & slightly unstable \\
0.5 & 86.6 & 0.91 & stable \\
1.0 & 65.8 & 0.93 & stable \\
2.0 & 91.3 & 0.91 & slightly over-constrained \\
4.0 & 78.3 & 0.92 & over-constrained \\
5.0 & 168.8 & 0.88 & over-constrained \\
10.0 & 324.5 & 0.82 & severely over-constrained \\
\bottomrule
\end{tabular}
\label{tab:kl_ablation}
\end{table}

\subsection{Effect of Contrastive Loss Temperature Parameter}\label{sec:ablation_cont}
\begin{figure}
    \centering
    \includegraphics[width=0.75\textwidth]{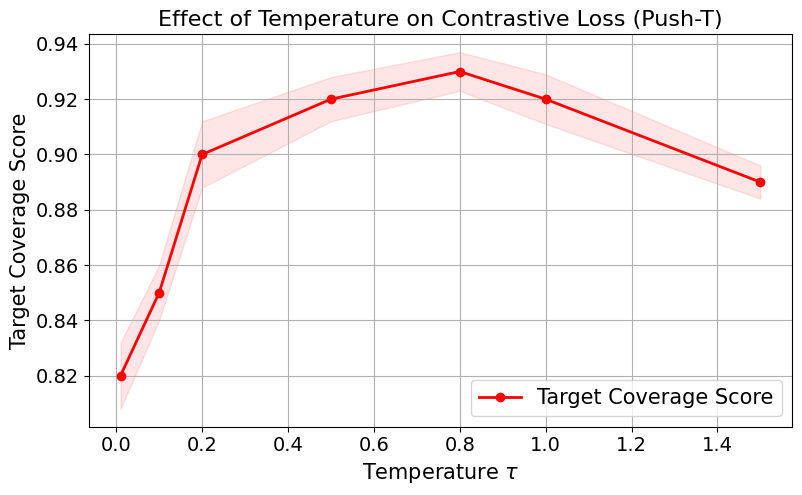} 
    \caption{
    Effect of temperature $\tau$ in the cycle-consistent contrastive loss on Push-T benchmark performance.}
    \label{fig:temperature}
\end{figure}
In this section, we investigate the influence of the temperature parameter $\tau$ in the cycle-consistent contrastive loss on the performance of our DP-AG.
As in self-supervised learning settings~\citep{chen2020simple}, the temperature parameter controls the sharpness of similarity scores between a query and its positive pairs in the contrastive learning.
A low $\tau$ enforces highly sharp alignment, while a high $\tau$ leads to smoother matching across samples.

In our DP-AG, unlike its usage for self-supervised learning, our contrastive loss is designed to enforce cycle consistency between perception and action noise predictions during latent evolution.
To evaluate the impact of the temperature parameter on this novel perception-action interplay, we vary $\tau$ across a broad range: $\tau \in \{0.01,\ 0.1,\ 0.2,\ 0.5,\ 0.8,\ 1.0,\ 1.5\}$.
We repeat the experiments with 5 different initialization seeds for training.

\textbf{Results.}
Figure~\ref{fig:temperature} presents the results, with a summary provided in Table~\ref{tab:tau_ablation}.
We observe:
\begin{itemize}[leftmargin=*]
    \item Very low temperature ($\tau=0.01,\ 0.1$) could overly sharp alignment that causes unstable training and brittle latent evolution.
    \item Moderate temperatures ($\tau=0.2,\ 0.5$) improve stability compared to $\tau=0.01$, but still do not fully leverage latent adaptability.
    \item High temperature ($\tau=0.8$) achieves the best performance, which yields the highest success rates and smoothest latent evolution.
    \item Very high temperature ($\tau=1.5$) could weaken cycle consistency, which leads to performance degradation.
\end{itemize}

\begin{table}[h]
\centering
\small
\setlength{\tabcolsep}{8pt}
\caption{Effect of temperature $\tau$ on cycle-consistent contrastive loss (Push-T benchmark).}
\begin{tabular}{lccc}
\toprule
$\tau$ & Converged Train Action MSE & Score & Stability \\
\midrule
0.01 & 433.8 & 0.82 & brittle \\
0.1 & 243.4 & 0.85 & unstable \\
0.2 & 186.9 & 0.90 & improved \\
0.5 & 103.1 & 0.92 & stable \\
0.8 & 65.8 & 0.93 & best stability \\
1.0 & 96.3 & 0.92 & slightly diffuse \\
1.5 & 105.5 & 0.89 & diffuse \\
\hline
\end{tabular}
\label{tab:tau_ablation}
\end{table}

\textbf{Analysis.}
In our DP-AG, a higher temperature (e.g., $\tau=0.8$) provides better flexibility for aligning noise predictions during latent evolution, allowing dynamic perception refinement while maintaining cycle consistency.
Extremely low $\tau$ values over-constrain the model, while excessively high values ($\tau > 1.0$) can weaken the noise prediction alignment.

\subsection{Comparison with Input Perturbation Smoothness}
To evaluate whether the stability achieved by DP-AG could be reproduced by alternative smoothness regularization, we compared against baselines that enforce consistency under input perturbations.
For each observation, we generated a perturbed version by adding Gaussian noise, then minimized MSE or cosine similarity between their predicted action noise scores. 
Table~\ref{tab:perturbation_smoothness} reports results on the Push-T benchmark. 
Action smoothness is quantified as normalized inverse jerk (higher is smoother).

\textbf{Results.} 
DP-AG clearly outperforms perturbation-based baselines, achieving higher success rates and smoother action sequences. 
This demonstrates that contrastive regularization not only enforces smoothness but also preserves semantic alignment, leading to superior policy performance.

\begin{table}[h!]
\centering
\caption{Comparison of DP-AG with input perturbation smoothness baselines on Push-T.}
\label{tab:perturbation_smoothness}
\vspace{2mm}
\begin{tabular}{lcccc}
\toprule
Method & SR (img) & SR (kp) & Smoothness (img $\uparrow$) & Smoothness (kp $\uparrow$) \\
\midrule
Perturbation MSE        & 0.85 & 0.92 & 0.83 & 0.87 \\
Perturbation Cosine     & 0.88 & 0.95 & 0.86 & 0.92 \\
{\bf DP-AG (Contrastive Loss)} & \textbf{0.93} & \textbf{0.99} & \textbf{0.91} & \textbf{0.95} \\
\bottomrule
\end{tabular}
\end{table}

\section{More Comparisons of Training Convergence}\label{sec:more_convergence}
\begin{figure}
    \centering
    \includegraphics[width=0.9\textwidth]{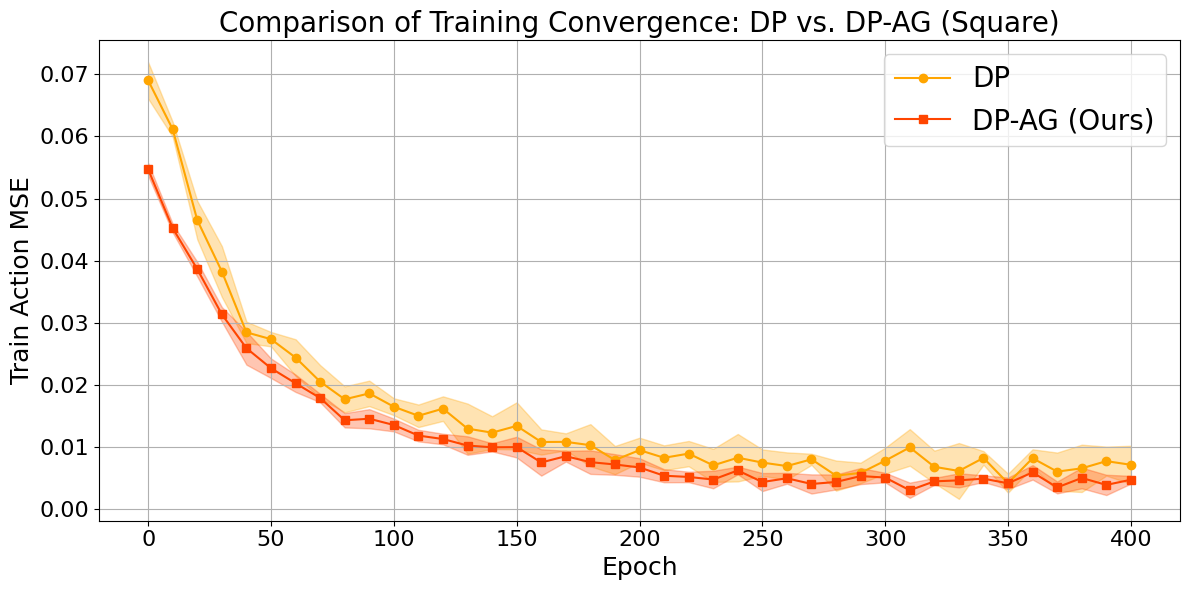} 
    \caption{Training action MSE over epochs on Robomimic Square.}
    \label{fig:squre-convergence}
\end{figure}

\begin{figure}
    \centering
    \includegraphics[width=0.9\textwidth]{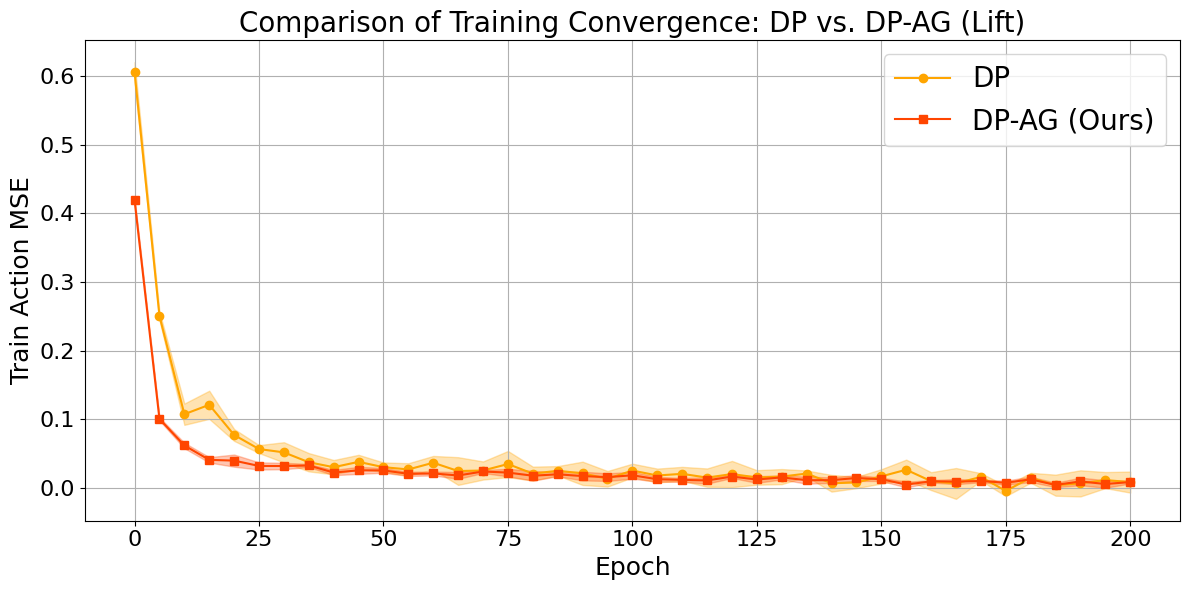} 
    \caption{Training action MSE over epochs on Robomimic Lift.}
    \label{fig:lift-convergece}
\end{figure}
In this section, we compare the training convergence behavior between the baseline DP and our DP-AG in Robomimic Lift and Square tasks.
Training convergence is evaluated by measuring the MSE between predicted and ground-truth actions over training epochs.

\textbf{Setup.}
Both the baseline DP and our DP-AG models are trained under the same settings, which use the same dataset splits, optimizer configurations, and diffusion schedules, etc.
Training is conducted on an Nvidia RTX 4090 GPU with a batch size of 64, and convergence is measured by tracking the action MSE over 300 epochs for each method.

\textbf{Results.}
The convergence curve for the Robomimic Square task is shown in Figure~\ref{fig:squre-convergence}, and the curve for the Robomimic Lift task is shown in Figure~\ref{fig:lift-convergece}.
These results demonstrate that the action-guided perception updates in our DP-AG not only improve the policy performance but also enhance the training dynamics.
By dynamically refining latent features through VJP-guided evolution, DP-AG accelerates the denoising process, which leads to more sample-efficient and stable training compared to the static latent used in the baseline DP.

\section{Real-World Manipulation Experiments on UR5 Robotic Arm}\label{sec:real-world}
We conduct real-world manipulation experiments using the Universal UR5 robotic arm (\url{https://www.universal-robots.com/products/ur5e/}), a 6-degree-of-freedom platform widely adopted in both industrial and research contexts for its versatility and reliability.
In this section, we compare the performance of our proposed DP-AG against the baseline Diffusion Policy (DP) across three visuomotor tasks: Painting, Candy Push, and Peg-in-Hole Insertion.
These tasks are specifically designed to evaluate the perception–action interplay, testing each policy’s ability to generalize from limited sensory inputs (e.g., RGB images) to precise motor control commands.
All evaluations are conducted in a controlled real-world setting, allowing us to evaluate each method’s effectiveness, robustness, and potential for the real-world deployment in practical robotic applications.

\textbf{Dataset Collection.}
We prepared task-specific datasets for Painting, Candy Push, and Peg-in-Hole Insertion to support our real-world experiments using the UR5 robotic arm. 
Each dataset was collected via kinesthetic teaching using a teach pendant.
Human operators manually guided the robot through the desired trajectories, while synchronized visual and proprioceptive data were recorded in real time.

RGB images were captured at 30 frames per second (FPS) using two calibrated cameras: one fixed in front of the workspace and another mounted on the robot wrist to provide egocentric views. 
Simultaneously, robot joint states and Cartesian end-effector poses were recorded at 100 Hz via the UR5’s internal API. 
To ensure alignment between visual and action modalities, timestamps from both the camera streams and robot sensors were synchronized using a shared system clock. 
Each trajectory thus consisted of time-aligned sequences of images and robot poses.

For the Painting task, we collected 100 trajectories by having the operator trace either heart-shaped or circular paths on a horizontal plane. Ground-truth 2D target contours were predefined, and the demonstrations were guided to closely follow these shapes.
The Candy Push task involved 80 trials where small candy objects were randomly placed within a $20\text{cm}\times20\text{cm}$ area. 
The human operator manually guided the robot to push each candy toward a designated target area marked on the table surface.
Success or failure labels were assigned based on whether all candies reached the target area within a tolerance threshold.
In the Peg-in-Hole Insertion task, 50 demonstrations were gathered by inserting pegs into variable hole positions, with occlusion and visual ambiguity intentionally introduced. 
Success was defined as full insertion without slippage or bounce-back.
All demonstrations were segmented and temporally synchronized with the image streams using trajectory timestamps.
Each dataset was then split into 80\% of the trajectories for training and 20\% for evaluation.

\textbf{Training and Implmentation Details.}
The training procedure and implementation closely follow those used in the simulation benchmark experiments.
We build our training pipeline on top of the DP framework, incorporating two core changes in our DP-AG: (1) two separate linear projection heads are added to map the static latent features $z_t$ to the drift term $\mu_\phi(z_t)$ and log-variance $\log \sigma_\phi^2(z_t)$, and (2) latent evolution is guided using a Vector–Jacobian Product (VJP)-driven stochastic differential equation (SDE), implemented without introducing new trainable parameters.

We extract visual features from RGB frames using a pre-trained ResNet-152 encoder. 
Proprioceptive inputs, including joint angles and velocities, are processed through a two-layer MLP. For diffusion noise prediction, we use the same conditional U-Net architecture as in the simulation experiments, where white noise serves as input and observation features act as conditioning signals.
During training, we apply random cropping with sizes tuned per task; at evaluation time, we apply center cropping for consistency. 
No color jitter or random horizontal flipping is used, as such augmentations may disrupt the temporal consistency critical for dynamics modeling.

Actions are normalized to the $[-1, 1]$ range per dimension to align with the denoising diffusion process, and outputs are clipped to stay within bounds. 
Peg-in-Hole tasks use 6D continuous rotation representations to avoid gimbal lock, while Candy Push and Painting use 3D axis-angle vectors for velocity control. 
Training follows the iDDPM objective with 100 diffusion steps. 
Each policy is trained for 200 epochs with a batch size of 64, using AdamW optimization and a cosine learning rate schedule with a 500-step warmup and a base learning rate of $1 \times 10^{-4}$.

\textbf{Evaluation Metrics.}
We evaluated each policy using 20\% held-out trajectories per task, with each trial repeated 20 times. 
Four task-specific metrics were used to evaluate performance: success rate, smoothness (average jerk), IoU (for Painting), and time to complete.
\begin{itemize}[leftmargin=*]
    \item {\bf Success Rate (\%)}: Measures how often the robot completes the task correctly among all trials. In Candy Push, a trial is successful if all candies are pushed into the target zone. 
    For Peg-in-Hole, success means the peg is fully inserted. 
    We report mean and standard deviation over 20 trials to demonstrate consistency under variations like random object positions or occlusions.

    \item {\bf Smoothness (Average Jerk)}: Captures the quality of motion based on jerk (the rate of change of acceleration). It is derived from the end-effector’s position data (recorded at 100 Hz) using finite difference approximations. 
    For a position sequence $\{x_1, x_2, \dots, x_n\}$ at uniform time intervals $\Delta t$, average jerk is computed as:
    
    Velocity (first derivative):
    \begin{equation}
        v_i = \frac{x_{i+1} - x_i}{\Delta t}.
    \end{equation}
    Acceleration (second derivative):
    \begin{equation}
    a_i = \frac{v_{i+1} - v_i}{\Delta t} = \frac{x_{i+2} - 2x_{i+1} + x_i}{\Delta t^2}.
    \end{equation}
    Jerk (third derivative):
    \begin{equation}
        j_i = \frac{a_{i+1} - a_i}{\Delta t} = \frac{x_{i+3} - 3x_{i+2} + 3x_{i+1} - x_i}{\Delta t^3}.
    \end{equation}
    Average jerk across the trajectory:
    \begin{equation}
        \text{Average Jerk} = \frac{1}{n} \sum_{i=1}^{n} \left\| j_i \right\|.
    \end{equation}
    Lower jerk values indicate smoother movements, which are especially important in tasks like Painting and insertion.

    \item {\bf Intersection over Union (IoU in \%)}: Used only in Painting, IoU evaluates how well the painted shape matches the ground-truth contour (i.e., heart, circle), which is calculated as:
    \begin{equation}
        \text{IoU} = \frac{\text{Intersection Area}}{\text{Union Area}}.
    \end{equation}
    A higher IoU means better adherence to the intended shape.

    \item {\bf Time to Complete (s)}: Records how long it takes to finish the task, which evaluates the efficiency alongside accuracy and smoothness. Faster times, particularly in Candy Push, suggest better adaptation to object dynamics.
\end{itemize}

These tasks evaluate perception-action interplay and generalization from partially observable inputs. Painting demands smooth motion, where jerk and IoU are important. 
Candy Push tests adaptation to varying layouts, with success rate and time as the key indicators. 
Peg-in-Hole requires 3D reasoning from 2D data, where success rate and jerk highlight stability. 
The dataset split ensures robust training and evaluation, consistent with the standard practice. 
In future work, we plan to explore sim-to-real transfer for our DP-AG, enabling real-world deployment with minimal or no additional data collection or model retraining.

\section{Visualizations of Action-Guided Latent Evolution}
\label{app:viz_vjp_pusht}
To clarify the semantic meaning of the latent drift in our DP-AG and why it matters for action prediction, we visualize how observation latents evolve under action guidance on the Push-T benchmark. 
The key question is:
\emph{“If the end-effector must move toward its desired future state, how should the latent representation of the current scene be adjusted?”}
In DP-AG, the answer is given by the VJP, which nudges the latent toward features that are most predictive of the next action. 
This latent drift is not arbitrary.
It is shaped by the policy’s uncertainty: when the action head is confident, updates are small; when the action is ambiguous, the latent is pulled toward features that disambiguate the correct motion. 
Thus, drift is expected to improve prediction by refining perception precisely where it is action-relevant.

To make these refinements interpretable, we decode the evolving latents with a lightweight VAE. 
This choice directly addresses the concern of whether latent drift has semantic meaning: decoded frames expose what the policy is implicitly “re-seeing” as actions unfold. 
If latent evolution were meaningless noise, decoded frames would be incoherent; instead, we observe structured changes that align with the intended push trajectory.

\begin{figure}
  \centering
  \includegraphics[width=\linewidth]{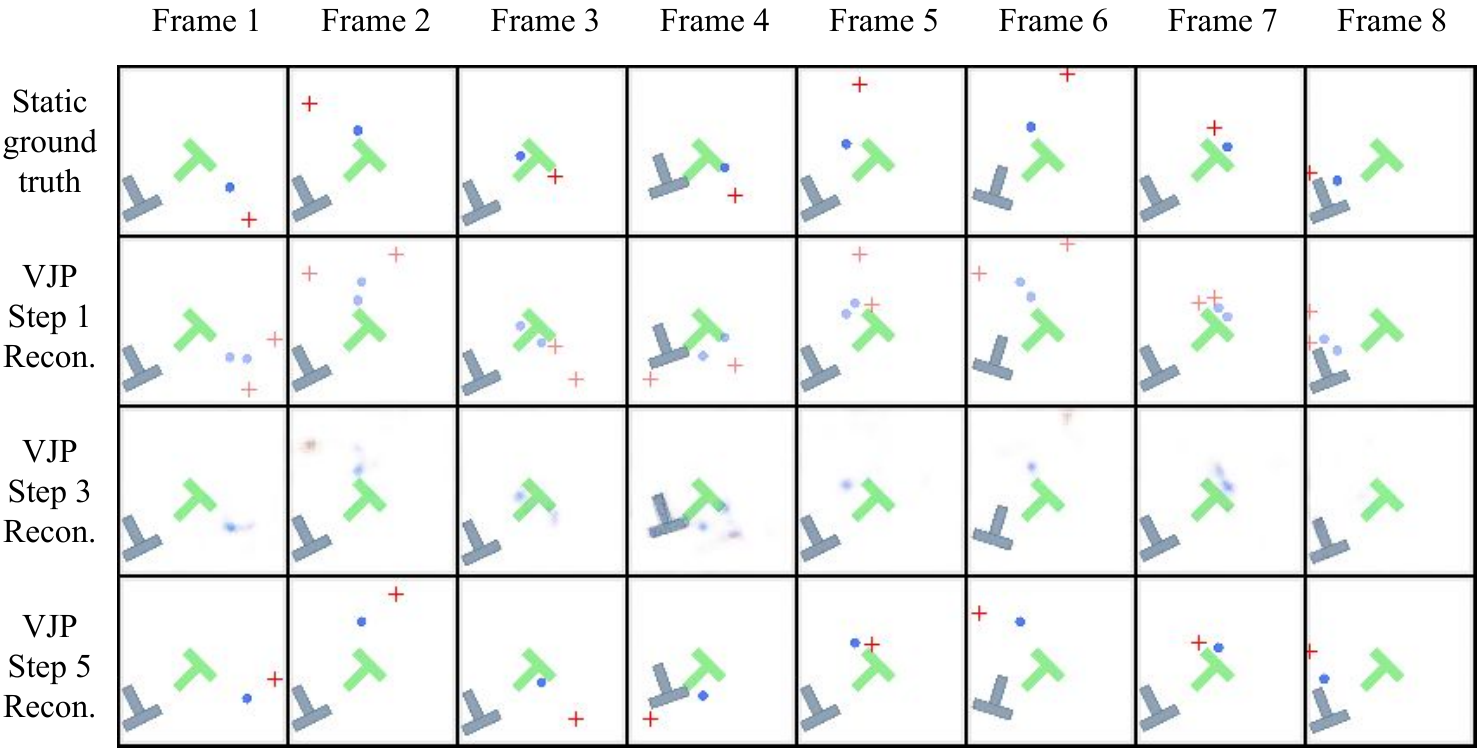}
  \caption{\textbf{Decoded VJP-evolved observation latents.}
  Columns are independent episodes.
  The top row ($k{=}0$) decodes the static latent.
  Rows $k{=}1,{\ldots},5$ decode the latent after each VJP step.}
  \label{fig:vjp_recon_only}
\end{figure}

\begin{figure}
  \centering
  \includegraphics[width=\linewidth]{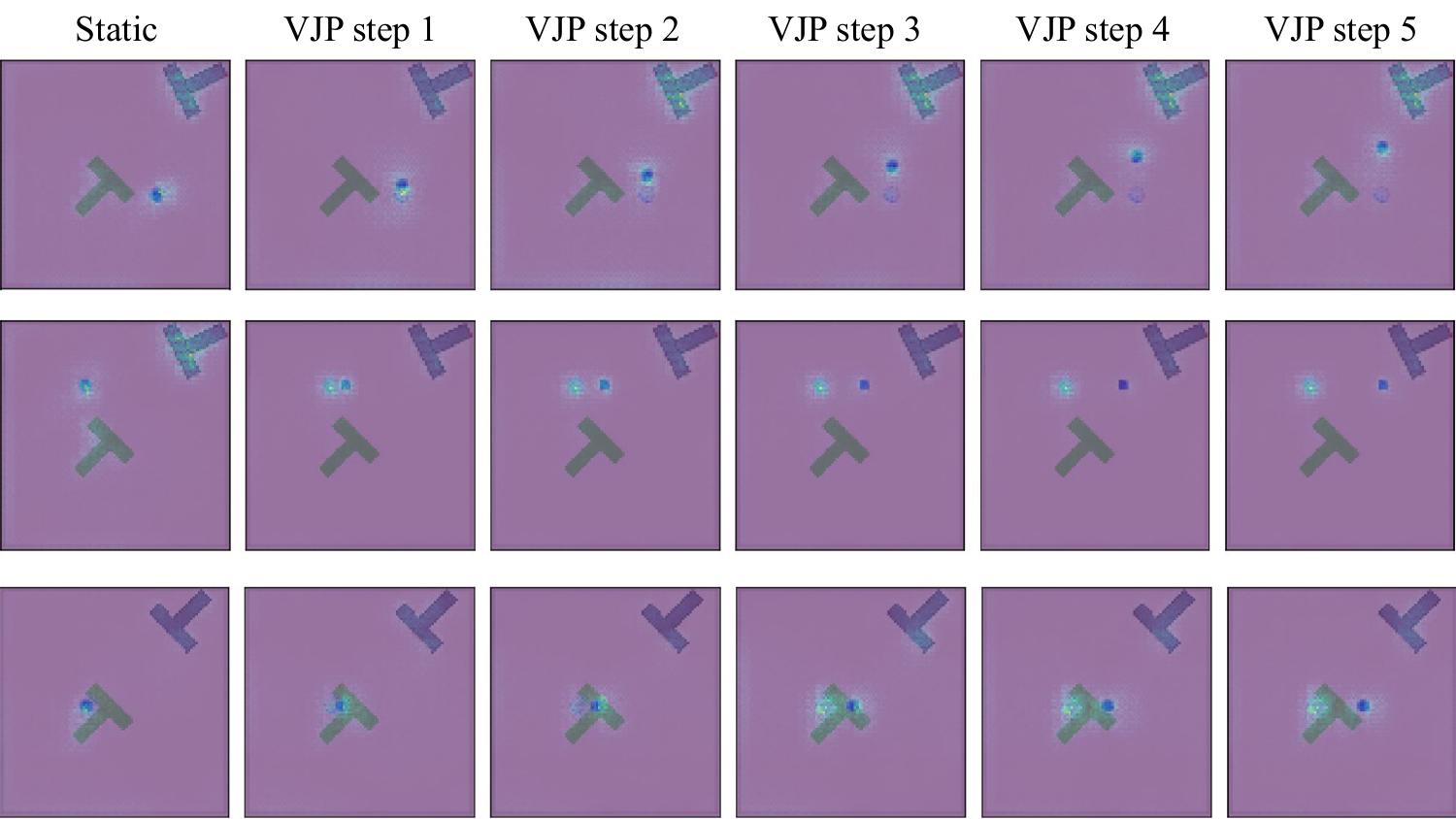}
  \caption{\textbf{Policy evidence on decoded frames.}
  Grad-CAM from the end-effector $(x,y)$ head of the policy is overlaid on the same reconstructions as Figure~\ref{fig:vjp_recon_only}.}
  \label{fig:vjp_recon_gradcam}
\end{figure}

\subsection{Decoded Reconstructions and Policy Attention.}
\textbf{Implementation.}  
Let $z_0$ be the static latent from the encoder. 
At each VJP step $k$, we update $z_k$ with an action-aligned perturbation normalized to fixed step size. 
A lightweight VAE, trained only on static latents and then frozen, decodes each $z_k$ to an image $\tilde{x}_k$. 
Gradients are never propagated back through this decoder. 
To reveal the policy’s evidence, we also compute Grad-CAM on the $(x,y)$ end-effector head and overlay it on the decoded frames.

\textbf{Visualization.}  
Figure~\ref{fig:vjp_recon_only} shows that the decoded VJP-evolved latents yield localized and coherent changes: the end-effector (blue gripper) shifts step by step toward its actual future position, while the background remains stable. 
Grad-CAM overlays (Figure \ref{fig:vjp_recon_gradcam}) confirm that these same regions receive the strongest policy attention. 
Notably, the static latent ($k=0$) contains no future cue, whereas the evolved latents ($k \geq 1$) highlight the precise spatial displacements the policy will execute. 
This demonstrates that the action-guided latent drift has both semantic meaning (visible in decoded images) and functional value (aligned with the policy’s attention).

Together, the two figures demonstrate that VJP-induced latent drift produces interpretable image-space refinements and that these refinements are spatially aligned with the policy’s own predictions.  
This coupling connects perceptual updates directly to the end-effector objective, demonstrating how action-guided latent evolution sharpens the visual grounding of control.

\begin{figure}
  \centering
  \includegraphics[width=\linewidth]{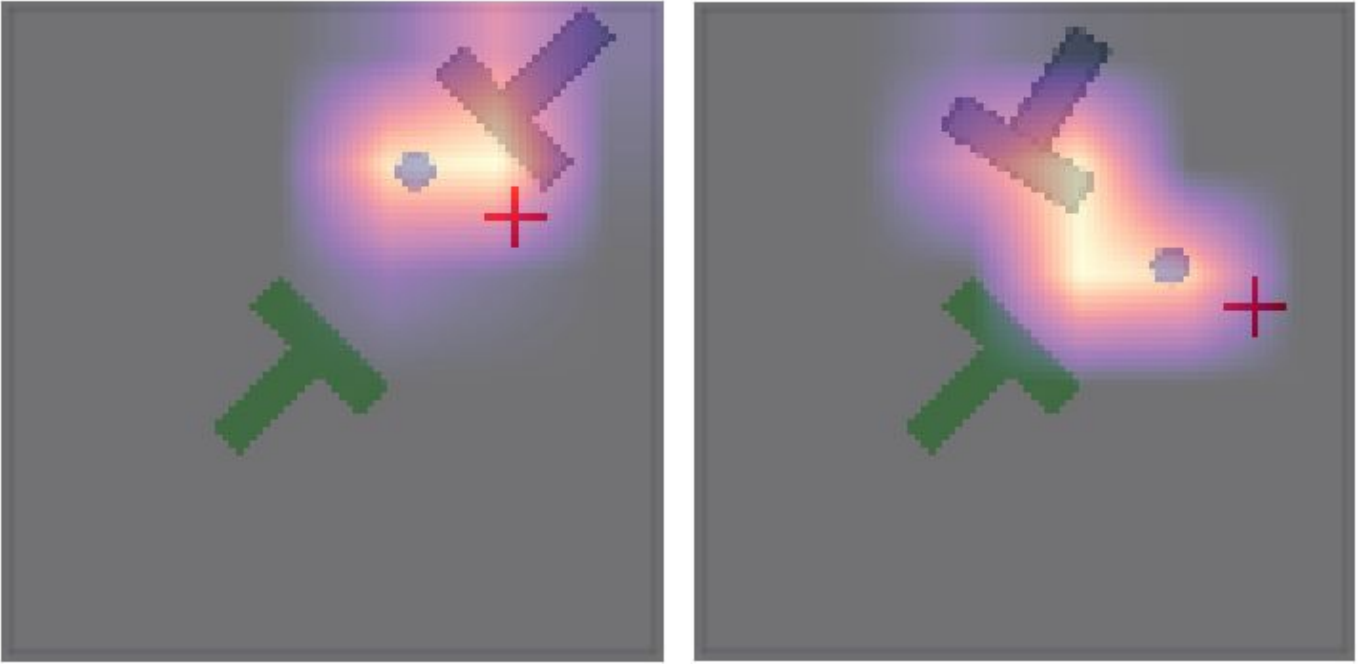}
  \caption{\textbf{Grad-CAM (control view).}
  Spatial evidence of the policy without reconstructions.}
  \label{fig:grad_cam_only}
\end{figure}

\subsection{Attention-Only Policy View.}
As a control, we project Grad-CAM directly onto the raw input frames without decoding (Figure \ref{fig:grad_cam_only}). 
These heatmaps still trace the end-effector’s trajectory, confirming that the decoder does not hallucinate the effect.
Instead, the VAE reconstructions provide a semantic lens: they make explicit how the evolving latent {\em imagines} the scene differently after each update.

\begin{figure}[t]
  \centering
  \includegraphics[width=\linewidth]{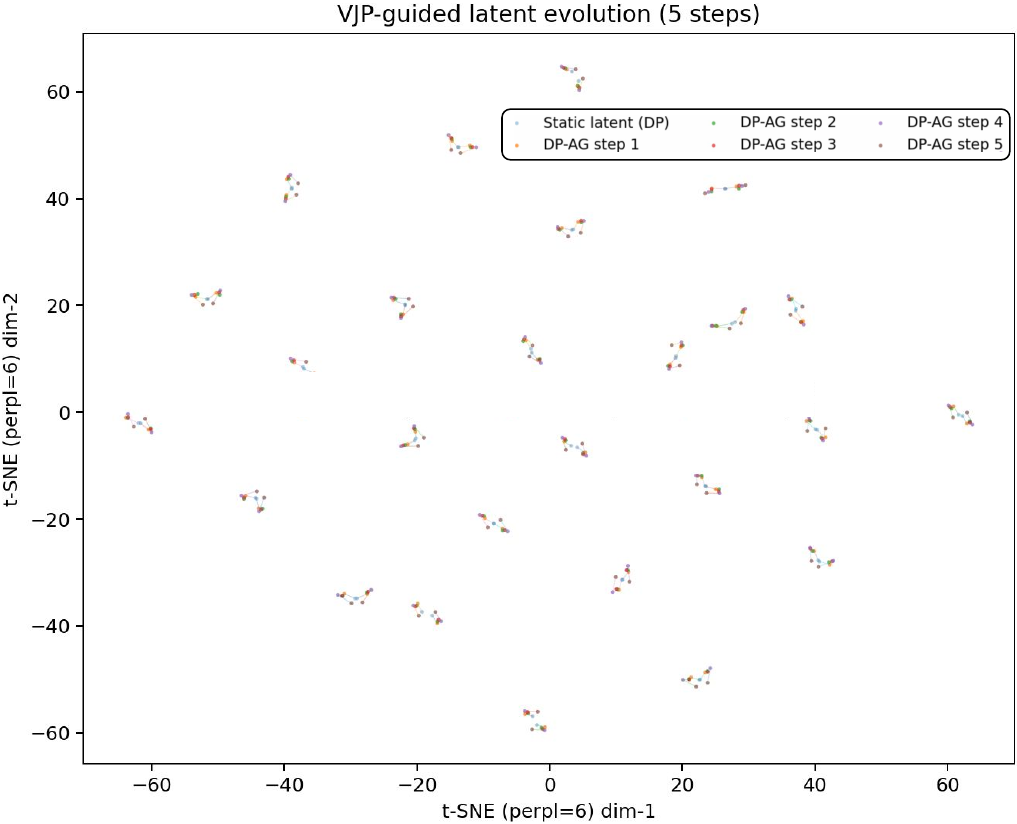}
  \caption{\textbf{t-SNE of static vs.\ evolved latents.}
  Each static latent (\textcolor{cyan}{blue}) is paired with its VJP trajectory over five steps (\textcolor{orange}{orange} $\rightarrow$ \textcolor{green}{green} $\rightarrow$ \textcolor{red}{red} $\rightarrow$ \textcolor{purple}{purple} $\rightarrow$ \textcolor{brown}{brown}).}
  \label{fig:tsne_vjp}
\end{figure}

\subsection{t-SNE of Action-Guided Latent Evolution.}
While decoded frames and Grad-CAM demonstrate where VJP modifies the observation in pixel space, they do not reveal how these updates are organized in latent space.  
t-SNE embeddings provide a complementary view by projecting both static and VJP-evolved latents into two dimensions, allowing us to inspect whether action-guided updates follow structured and task-aligned trajectories rather than random drift.  

\textbf{Visualization.}  
The t-SNE embeddings in Figure~\ref{fig:tsne_vjp} show that VJP does not cause latents to scatter randomly. 
Instead, each static latent ($k{=}0$, \textcolor{cyan}{blue}) serves as an anchor from which the evolved latents ($k{=}1,\ldots,5$) trace short, smooth, and consistently oriented trajectories. 
Across episodes, these trajectories align to form clusters that correspond to motion modes, such as push direction, indicating that VJP structures the latent space around action-relevant dynamics rather than noise. 
Importantly, the cycle-consistent contrastive loss further reinforces this structure: it pulls each evolved latent back toward its static anchor while pushing it away from unrelated samples, preserving semantic grounding and preventing arbitrary drift. 
As a result, the trajectories remain compact and discriminative, ensuring that latent evolution not only tracks action dynamics but also maintains clear and task-relevant separation.

In summary, across decoded frames, attention overlays, and t-SNE latent embeddings, a coherent picture emerges: 
\begin{enumerate}[leftmargin=*]
    \item VJP-driven drift produces semantically interpretable refinements (decoded end-effector positional shifts).  
    \item These refinements highlight exactly the regions that the policy deems most informative for predicting the next action.
    \item In latent space, drift follows smooth and task-aligned trajectories rather than arbitrary noise.  
\end{enumerate}
Together, these results demonstrate that action-guided latent evolution is not only mathematically principled but also semantically and functionally meaningful.

\section{Extension to Online Mode}
Although our main experiments are conducted in offline imitation learning, DP-AG is inherently well-suited for online training.
In this section, we present a streaming variant of DP-AG that remains fully consistent with the original formulation.
Its key mechanism, action-guided latent evolution via VJP, naturally supports the incremental refinement of perceptual features as new action-observation pairs arrive.
Below, we describe the specific modifications that enable DP-AG to function effectively in the online regime:

\textbf{Online latent evolution.}
At each diffusion step $k \in \{1,\dots,K\}$ between $t$ and $t{+}1$, the latent observation is updated using the same VJP-guided rule as in the main model:
\begin{equation}\label{eq:online_latent_update}
    \tilde{z}_t^{k} \;=\; \mu_\phi(z_t) \;+\; \gamma \, \sigma_\phi(z_t) \odot 
    \left(\frac{\partial \epsilon_\theta(\hat{a}_t^{k}, z_t, k)}{\partial z_t}\right)^{\!\top} 
    \epsilon_\theta(\hat{a}_t^{k}, z_t, k),
\end{equation}
where $\gamma$ controls the step size of action-guided latent refinement. 
Noise predictions conditioned on static and evolved latents are
\[
    \varepsilon_k = \epsilon_{\theta}(\hat{a}_t^k, z_t, k), 
    \qquad 
    \tilde{\varepsilon}_k = \epsilon_{\theta}(\hat{a}_t^k, \tilde{z}_t^k, k),
\]
and are aligned through the cycle-consistent InfoNCE loss
\begin{equation}
\mathcal{L}_{\text{cont}} 
= -\frac{1}{B}\sum_{i=1}^B 
\log \frac{\exp\!\big( \mathrm{sim}(\varepsilon^i_k, \tilde{\varepsilon}^i_k) / \tau \big)}
{\sum_{j\neq i} \exp\!\big( \mathrm{sim}(\varepsilon^i_k, \tilde{\varepsilon}^j_k) / \tau \big)}.
\end{equation}
The overall loss remains unchanged:
\begin{equation}
\mathcal{L}_{\text{DP-AG}} = \mathcal{L}_{\text{DP}} + \lambda_{\text{cont}} \,\mathcal{L}_{\text{cont}} + \lambda_{\text{KL}} \,\mathcal{L}_{\text{KL}}.
\end{equation}

\textbf{Streaming update rule.}
Unlike the offline variant, which optimizes $\{\theta,\phi\}$ over a fixed dataset, the online version continuously updates the model as new observations arrive.
At each environmental time $t$, DP-AG both executes an action and uses the resulting transition as fresh training data:
\begin{enumerate}[leftmargin=*]
    \item \textbf{Sample latent:} Draw $z_t \sim q_\phi(z_t|o_t)$ from the variational posterior of the current observation.
    \item \textbf{Within-step diffusion (for $k=1,...,K$):}
    \begin{enumerate}
        \item Compute the noise prediction $\varepsilon_k = \epsilon_\theta(\hat{a}_t^k, z_t, k)$.
        \item Update the latent via Eq.~\eqref{eq:online_latent_update} to obtain $\tilde{z}_t^k$.
        \item Compute $\tilde{\varepsilon}_k$ and accumulate the contrastive loss $\mathcal{L}_{\text{cont}}$.
        \item Accumulate the diffusion loss $\mathcal{L}_{\text{DP}}$.
    \end{enumerate}
    \item \textbf{Action execution:} After $K$ denoising steps, decode and execute the action $a_t$ in the environment.
    \item \textbf{Immediate update:} Use the freshly collected pair $(o_t, a_t)$, optionally together with a short replay buffer containing the last $N$ transitions, to perform a lightweight gradient update of $\theta$ and $\phi$ on $\mathcal{L}_{\text{DP-AG}}$ using online RMSProp~\citep{ma2025efficient}.
\end{enumerate}
This schedule makes DP-AG online: model parameters are updated continually from streaming interaction data, rather than only once on a static offline dataset. 
As a result, the policy can adapt on the fly to distributional shifts or novel dynamics while acting.

\textbf{Experimental setup.}
We evaluated both the offline and online variants of DP-AG on Push-T (both image- and keypoint-based settings) and Dynamic Push-T benchmarks. 
Performance was measured as the average success rate over 50 online evaluation episodes.
Unless otherwise noted, hyperparameters matched those of the offline configuration.

\begin{table}[h]
\centering
\caption{Online streaming results of DP-AG compared to baselines (success rates, mean$\pm$std).}
\label{tab:online}
\begin{tabular}{lccc}
\toprule
Method & Push-T (img) & Push-T (kp) & Dynamic Push-T (img) \\
\midrule
Diffusion Policy (DP) & 0.87$\pm$0.04 & 0.95$\pm$0.03 & 0.65$\pm$0.85 \\
DP-AG (offline) & 0.93$\pm$0.02 & 0.99$\pm$0.01 & 0.80$\pm$0.53 \\
DP-AG (online)  & 0.90$\pm$0.03 & 0.96$\pm$0.02 & 0.76$\pm$0.89 \\
\bottomrule
\end{tabular}
\end{table}

\textbf{Results.}
As shown in Table~\ref{tab:online}, the streaming variant of DP-AG yields slightly lower scores than its offline counterpart, which is expected given the limited replay and noisier updates inherent to online training. 
Nevertheless, it consistently surpasses the standard DP baseline by a clear margin, demonstrating that DP-AG retains effective even under the more challenging online setting with continual real-time updates.

In summary, the online variant of DP-AG remains fully consistent with the offline model.
It employs the same VJP definition and latent update rule, with the time discretization constant absorbed into $\gamma$.
Cycle-consistent contrastive alignment is still applied at every diffusion step, ensuring the perception–action loop is preserved within each $[t, t{+}1]$ horizon.
Moreover, both the KL regularizer on $q_\phi$ and the overall training loss $\mathcal{L}_{\text{DP-AG}}$ remain unchanged; only the optimization schedule differs, shifting from offline minibatch training to lightweight streaming updates.

\section{Multimodal Decision-Making in DP-AG}

DPs are inherently capable of modeling multimodal action distributions, making them well-suited for capturing diverse strategies in complex tasks. 
By extending DPs with action-guided latent evolution, DP-AG not only preserves this multimodality but amplifies it. 
DP-AG enhances the representation of multiple effective strategies through two complementary components: 
\begin{itemize}[leftmargin=*]
    \item {\bf Action-conditioned latent evolution.} 
    DP-AG’s VJP-guided latent updates let the model refine features in response to action feedback, so each sampled trajectory adapts its latent state uniquely. 
    This supports parallel exploration of multiple strategies by leveraging stochasticity from both the diffusion policy and latent SDE.
    
    \item {\bf Cycle-consistent contrastive alignment.} 
    The contrastive loss keeps features for the same action close and pushes different actions apart, organizing the latent-action space to separate and preserve multiple plausible strategies for each task.
\end{itemize}

{\bf Experimental setup.}
We evaluated multimodal capability on the Franka Kitchen benchmark, which naturally admits multiple strategies for tasks such as opening a drawer or flipping a switch. 
40 successful trajectories per task were collected and encoded as key end-effector waypoints. 
Distances between trajectories were computed via Dynamic Time Warping (DTW), followed by unsupervised clustering to identify distinct modes. 
This procedure follows the existing work on multimodal diffusion policies~\citep{li2024learning}, which demonstrated that clustering trajectories provides a principled way to quantify behavioral diversity. 
We then measured (a) the number of discovered modes, (b) inter-cluster distance (strategy distinctiveness), (c) intra-cluster variance (consistency within a strategy), and (d) success rates across all clusters.

\begin{table}[h!]
\centering
\setlength{\tabcolsep}{2.5pt}
\caption{Multiple strategy discovery and diversity on Franka Kitchen. Higher \#Modes ($\uparrow$) and inter-cluster distance ($\uparrow$), and lower intra-cluster variance ($\downarrow$), indicate better multimodal representation.}
\label{tab:multimodal1}
\vspace{2mm}
\begin{tabular}{lccccccc}
\toprule
Method & \# Modes& SR (t1) & SR (t2) & SR (t3) & SR (t4) & Inter-Cluster Dist. & Intra-Cluster Var.\\
\midrule
FlowPolicy    & 1.1 & 0.96 & 0.86 & 0.95 & 0.87 & 2.0 & 1.2 \\
DP (Baseline) & 2.8 & 1.00 & 1.00 & 1.00 & 0.99 & 7.1 & 1.2 \\
\midrule
{\bf DP-AG (Ours)}  & 3.2 & 1.00 & 1.00 & 1.00 & 1.00 & 9.5 & 1.3 \\
\bottomrule
\end{tabular}
\end{table}

\begin{table}[h!]
\centering
\setlength{\tabcolsep}{4pt}
\caption{Cluster-specific analysis of strategies discovered by our DP-AG on Franka Kitchen.}
\label{tab:multimodal2}
\vspace{2mm}
\begin{tabular}{c c c c c c l}
\toprule
Mode ID & Coverage (\%) & SR (t1) & SR (t2) & SR (t3) & SR (t4) & Strategy Description \\
\midrule
1 & 41 & 1.00 & 1.00 & 1.00 & 1.00 & Left-handed drawer pull \\
2 & 32 & 1.00 & 1.00 & 1.00 & 1.00 & Right-handed drawer pull \\
3 & 27 & 1.00 & 1.00 & 1.00 & 1.00 & Two-step approach / mixed arm \\
\bottomrule
\end{tabular}
\end{table}

\begin{table}[h!]
\centering
\setlength{\tabcolsep}{6pt}
\caption{Robustness to distribution shifts in Franka Kitchen (task t4).}
\label{tab:multimodal3}
\vspace{2mm}
\begin{tabular}{l c c c l}
\toprule
Method & \# Modes & SR (t4) & Switch Rate (\%) & Comment \\
\midrule
FlowPolicy    & 1.1 & 0.64 & 12 & Rarely adapts, prone to failure \\
DP (Baseline) & 2.2 & 0.81 & 35 & Sometimes adapts, less reliable \\
\midrule
{\bf DP-AG (Ours)}  & 2.9 & 0.95 & 58 & Switches to alternatives if blocked \\
\bottomrule
\end{tabular}
\end{table}

\textbf{Results.} 
Table~\ref{tab:multimodal1} presents the overall diversity analysis, which demonstrates that our DP-AG uncovers and maintains on average more than three distinct strategies per task, significantly more than FlowPolicy (1.1) and slightly more than baseline DP (2.8). 
These strategies are not minor variants: the inter-cluster distance is significantly higher, confirming that the modes correspond to distinct behaviors. 
At the same time, intra-cluster variance remains low, ensuring compact and consistent execution. 
Table~\ref{tab:multimodal2} provides a cluster-specific breakdown, highlighting interpretable strategies such as left-handed drawer pulls, right-handed pulls, and two-step mixed-arm approaches, each achieving near-perfect success.
Moreover, Table~\ref{tab:multimodal3} evaluates robustness under distribution shifts.
When trajectories were blocked, DP-AG preserved an average of 2.9 modes with a 58\% switch rate to alternative strategies, while DP and FlowPolicy degraded significantly.

These findings demonstrate that our DP-AG not only retains the multimodal decision-making ability of DPs but actively encourages the emergence of more rich, diverse, and interpretable strategies. 
Its robustness to distribution shifts highlights its potential for real-world robotic applications where flexibility and adaptability are important.

\section{More Experiments on Trajectory Planning}\label{sec:more_trajectory}
To situate our DP-AG within the broader family of trajectory generation methods, we additionally compare it against Diffuser~\citep{janner2022diffuser} and Hierarchical Diffuser~\citep{chen2024simple}.
Following the experimental protocol of~\citep{chen2024simple}, we adopt two benchmark families: (i) Maze2D and Multi2D tasks from D4RL, which evaluate long-horizon navigation in continuous control environments, and (ii) the multi-stage FrankaKitchen benchmark, which evaluates robotic manipulation requiring both skill composition and generalization to unseen states. 
In Maze2D and Multi2D, an agent must reach goals in varied layouts (U-Maze, Medium, Large), with performance measured by average return. 
In FrankaKitchen, policies are rolled out from diverse initial states and evaluated by the number of sub-tasks completed within long-horizon episodes.

{\bf Experimental setup.}
We match the evaluation protocol of the baselines, adopting the same trajectory segmentation and planning horizons ({\em e.g.,} $K=15$ for Maze2D and Multi2D, $K=4$ for Gym-MuJoCo, and horizon lengths $H=120$ for U-Maze and $H=255$ for Maze2D Medium).
All models are evaluated over 100 random seeds, reporting the average return or goal completion rate under identical training, validation, and testing splits.

\begin{table}[h!]
\centering
\caption{Trajectory planning on Maze2D and Multi2D (long-horizon). Performance is measured by average return (higher is better).}
\label{tab:maze2d}
\begin{tabular}{lccc}
\toprule
Environment & Diffuser & Hierarchical Diffuser & \textbf{DP-AG (Ours)} \\
\midrule
Maze2D U-Maze   & 113.9 $\pm$ 3.1 & 128.4 $\pm$ 3.6 & \textbf{142.7 $\pm$ 2.9} \\
Maze2D Medium   & 121.5 $\pm$ 2.7 & 135.6 $\pm$ 3.0 & \textbf{150.2 $\pm$ 2.6} \\
Maze2D Large    & 123.0 $\pm$ 6.4 & 155.8 $\pm$ 2.5 & \textbf{172.1 $\pm$ 2.2} \\
Multi2D U-Maze  & 128.9 $\pm$ 1.8 & 144.1 $\pm$ 1.2 & \textbf{168.2 $\pm$ 1.2} \\
Multi2D Medium  & 127.2 $\pm$ 3.4 & 140.2 $\pm$ 1.6 & \textbf{153.7 $\pm$ 1.3} \\
Multi2D Large   & 132.1 $\pm$ 5.8 & 165.5 $\pm$ 0.6 & \textbf{182.2 $\pm$ 0.5} \\
\midrule
Average         & 124.4 & 145.0 & \textbf{161.5} \\
\bottomrule
\end{tabular}
\end{table}

\begin{table}[h!]
\centering
\caption{Multi-stage robotic manipulation on FrankaKitchen. Performance is measured by average number of completed sub-tasks.}
\label{tab:kitchen}
\begin{tabular}{lccc}
\toprule
Task & Diffuser & Hierarchical Diffuser & \textbf{DP-AG (Ours)} \\
\midrule
Partial Kitchen & 56.2 $\pm$ 5.4 & 73.3 $\pm$ 1.4 & \textbf{82.2 $\pm$ 1.3} \\
Mixed Kitchen   & 50.0 $\pm$ 8.8 & 71.7 $\pm$ 2.7 & \textbf{78.5 $\pm$ 2.1} \\
\midrule
Average         & 53.1 & 72.5 & \textbf{80.4} \\
\bottomrule
\end{tabular}
\end{table}

\textbf{Results.}
Across all benchmarks, DP-AG achieves the highest returns, with average improvements of +16.5 over the Hierarchical Diffuser and +37.1 over the Diffuser in Maze2D and Multi2D (Table~\ref{tab:maze2d}). 
These performance gains indicate that our DP-AG generates longer and higher-quality trajectories, especially in large and complex environments. 
In the multi-stage FrankaKitchen benchmark (Table~\ref{tab:kitchen}), DP-AG completes on average 80.4 sub-tasks, significantly outperforming both Hierarchical Diffuser (72.5) and Diffuser (53.1), which highlights DP-AG’s effectiveness in composing diverse skills and generalizing them to extended horizons.
Together, these results indicate that the action-guided perception-action loop in DP-AG provides clear advantages for both navigation and manipulation tasks, beyond what trajectory diffusion alone can achieve.

\section{Extending DP-AG with World Models}\label{app:world_model}
World models such as DreamerV3~\citep{hafner2025dreamerv3} and Unified World Models (UWM)~\citep{zhu2025uwm} learn to ``imagine'' latent trajectories by predicting future states and rewards. 
While these predictions enable long-horizon planning, the latent trajectories are typically static within each rollout: once generated, they do not adapt to evolving action feedback. 
In contrast, DP-AG continually updates latent features within each action diffusion step, guided by action-conditioned VJPs. 
By combining with world models, our DP-AG gains long-horizon foresight while retaining its intra-step refinement.

\textbf{Hybrid Architecture.}
We integrate DP-AG into UWM by adding an intra-step feedback mechanism on top of UWM’s diffusion transformer backbone:
\begin{itemize}[leftmargin=*]
    \item \textbf{UWM Backbone.} 
    UWM jointly learns to predict actions and future visual observations by treating them as parallel diffusion processes. 
    This unified design allows the same model to serve flexibly as a policy, a forward dynamics predictor, an inverse dynamics model, or a video generator, simply by adjusting which modalities are denoised. 
    The architecture uses ResNet-based encoders for observations, spatiotemporal patching for latent images, shallow MLPs for actions, and transformer layers conditioned through adaptive normalization with additional register tokens to promote information exchange across modalities.

    \item \textbf{DP-AG Intra-step Feedback.} During action generation, DP-AG introduces a feedback signal into UWM’s latent representations. 
    At each denoising step, the current action prediction influences how features are refined, ensuring that perception and action remain aligned as the sequence unfolds.
    This intra-step refinement captures the core principle of DP-AG, where even a fixed observation is reinterpreted in light of evolving actions.
    
    \item \textbf{Bounded Consistency.} To prevent excessive feature drift, we apply a lightweight penalty that encourages smooth updates across steps, complemented by DP-AG’s cycle-consistent contrastive anchor. 
    This balance preserves stability while still enabling action-driven adaptability.
\end{itemize}

\textbf{Implementation Details.}
\begin{itemize}[leftmargin=*]
    \item {\bf Observation and Conditioning.} 
    Visual inputs are processed by frozen VAEs into compact latent grids, which are patchified and combined with action tokens.
    Diffusion time steps for both action and observation branches are encoded as embeddings and injected into each transformer block.

    \item {\bf Training Objective.} 
    UWM is trained with a coupled denoising loss over action and future-image predictions. 
    For action-free videos, the action branch is masked out, allowing the same objective to learn from both robot trajectories and pure video data.

    \item {\bf DP-AG Integration.} 
    The action-guided feedback mechanism is applied during both imagination and execution. 
    At training time, refinement losses are added alongside UWM’s standard objectives; at inference time, feedback ensures that action sampling and imagined rollouts remain tightly coupled.

    \item {\bf Compute.} Following UWM’s reported setup, training runs efficiently on Nvidia A100 GPUs.
\end{itemize}

\begin{table}[h!]
\centering
\setlength{\tabcolsep}{3.5pt}
\caption{Success rate for combining DP-AG with UWM on LIBERO tasks.}
\label{tab:libero}
\begin{tabular}{lccccc c}
\toprule
Method & Book-Caddy & Soup-Cheese & Bowl-Drawer & Moka-Moka & Mug-Mug & Average \\
\midrule
DP          & 0.78 & 0.88 & 0.77 & 0.65 & 0.53 & 0.71 \\
DP-AG       & 0.86 & 0.92 & 0.85 & 0.72 & 0.60 & 0.79 \\
UWM         & 0.91 & 0.93 & 0.80 & 0.68 & 0.65 & 0.79 \\
UWM\,+\,DP-AG & \textbf{0.94} & \textbf{0.95} & \textbf{0.87} & \textbf{0.75} & \textbf{0.70} & \textbf{0.84} \\
\bottomrule
\end{tabular}
\end{table}

\textbf{Experimental Results on LIBERO.}
Following the UWM protocol, we fine-tune each model on the LIBERO benchmark tasks and report average success rates.
As shown in Table~\ref{tab:libero}, the hybrid model UWM\,+\,DP-AG outperforms both UWM and DP-AG alone, especially in manipulation tasks requiring real-time adaptation ({\em e.g.,} Mug-Mug). 
This confirms that action-guided latent updates make world models more responsive, while world models provide long-horizon foresight to complement DP-AG's intra-step refinement.

\textbf{Discussion.}
This extension highlights a conceptual bridge: world models offer look-ahead imagination of possible futures, while DP-AG provides real-time corrective adaptation from action feedback within each imagined step. 
Together, they yield policies that are both farsighted and responsive, achieving long-horizon planning while adapting online to evolving cues. 
We will conduct an in-depth study on this subject in the future work.

\end{document}